\documentclass{article}


 \usepackage{multirow}
\usepackage[table,xcdraw]{xcolor}



\usepackage[nonatbib, preprint]{neurips_2022}
\usepackage[utf8]{inputenc} 
\usepackage[T1]{fontenc}    
\usepackage{hyperref}       
\usepackage{url}            
\usepackage{macros}
\usepackage{booktabs}       
\usepackage{amsfonts}       
\usepackage{nicefrac}       
\usepackage{microtype}      
\usepackage{xcolor}         
\usepackage[textsize=tiny,colorinlistoftodos]{todonotes}
\usepackage{adjustbox}
\usepackage{floatrow}
\usepackage{amsmath}
\usepackage{amssymb}
\usepackage{mathtools}
\usepackage{amsthm}
\usepackage{caption}
\usepackage{subcaption}
\usepackage{enumitem}
\usepackage{float}

\usepackage[capitalize,noabbrev]{cleveref}
\usepackage{bbm}

\theoremstyle{plain}
\makeatletter
\makeatother
\newcommand{\m}{\mathcal}
\renewcommand{\a}{\alpha}
\renewcommand{\b}{\beta}
\newcommand{\xhdr}[1]{\vspace{0.1mm}\noindent{{\bf #1.}}}
\setlength{\abovedisplayskip}{-30pt}
\setlength{\belowdisplayskip}{-15pt}
\setlength{\abovedisplayshortskip}{-30pt}
\setlength{\belowdisplayshortskip}{-15pt}
\setlength{\textfloatsep}{10pt plus 0.0pt minus 0.0pt}
\setlength\abovecaptionskip{0pt}
\setlength\belowcaptionskip{0pt}
\newfloatcommand{capbtabbox}{table}[][\FBwidth]

\title{Tuning the Geometry of Graph Neural Networks}

%

\author{%
  Sowon Jeong \\
  Department of Statistics\\
University of Chicago\\
  Chicago, IL, 60637 \\
  \texttt{sowonjeong@uchicago.edu} \\
   \And
Claire Donnat \\
University of Chicago\\
  Chicago, IL, 60637 \\
   \texttt{cdonnat@uchicago.edu} \\
}

\begin{document}

\maketitle

\begin{abstract}

By recursively summing node features over entire neighborhoods, spatial graph convolution operators have been heralded as key to the success of Graph Neural Networks (GNNs). Yet, despite the multiplication of GNN methods across tasks and applications, the impact of this aggregation operation on their performance still has yet to be extensively analyzed. In fact, while efforts have mostly focused on optimizing the architecture of the neural network, fewer works have attempted to characterize  \textit{(a)  the different classes of spatial convolution operators}, {\it(b) how the choice of a particular class relates to properties of the data }, and {\it (c) its impact on the geometry of the embedding space}. In this paper, we propose to answer all three questions by dividing existing operators into two main classes ({\it symmetrized} vs. {\it row-normalized} spatial convolutions), and show how these translate into different implicit biases on the nature of the data. Finally, we show that this aggregation operator is in fact tunable, and explicit regimes in which certain choices of operators --- and therefore, embedding geometries --- might be more appropriate. 

\end{abstract}
\vspace{-0.3cm}
\section{Introduction and Motivation}
 As the extension of the Deep Neural Network (DNN) machinery to the graph setting, Graph Neural Networks (GNN) offer a powerful paradigm for extending Machine Learning tools to the analysis of relational data, typically modelled through a graph \cite{battaglia2018relational,dong2020graph, hamilton2017inductive, kipf2016semi, wu2020comprehensive,zhou2020graph}. The recent impressive success of GNNs across tasks and applications \cite{casas2019spatially,gaudelet2021utilizing,gilmer2017neural,li2021representation,ma2019genn,wu2020graph} in dealing with this complex data type has been attributed to their two main components (see Figure~\ref{fig:Schema}) : \textbf{(a) a convolution operator} $\mc{C}$ (or propagation operator \cite{zhou2020graph,zhou2020understanding}) that aggregates information contained in the neighborhood of any given node to create neighborhood-aware embeddings; and \textbf{ b) a  non-linear layer} --- or transformator\cite{zhou2020understanding} ---, that multiplies the convolved features by a weight matrix before applying a non-linearity. All weight and bias parameters of the GNN are typically trained in an end-to-end fashion and, as for Deep Neural Networks, have been deemed essential in creating powerful {non-linear node embeddings} that can be tailored to any downstream prediction task. 
Depending on the architecture of the network, such graph convolution blocks are then stacked and/or repeated to encode varying ``receptive depths'' \cite{frasca2020sign, kipf2016semi,wu2019simplifying}. More formally, denoting as $H^{(k)}$  the output of the $k^{th}$ layer, GNNs can be broadly understood as a pipeline of sequential node convolutions of the form $H^{(k)}_u = \sigma(\mc{C}_{\mc{N}(u)}( H^{(k-1)}) W_k  + b_k)$,
where $H^{(0)} = X$ are the nodes' raw features, $\sigma$ is a non-linearity (e.g. ReLU), and $\mc{C}_{\mc{N}(u)}$ is the convolution operator applied to the neighborhood $\mc{N}(u)$ of node $u.$
The final layer is always taken to be linear and written as:
\vspace{-0.15cm}
\begin{gather} \label{eq:last_layer}
    H_u^{(K)} = \mc{C}_{\mc{N}(u)}(H^{(K-1)})  W^{(K)}  + b^{(K)}.
\end{gather}
Throughout this paper, we interpret the quantity $ \mc{C}_{\mc{N}(u)}( H^{(K-1)})$ as our node embeddings, so that the last layer can be understood as a  generalized linear model \cite{mccullagh2019generalized} operating on the transformed input features $ \mc{C}_{\mc{N}(u)}( H^{(K-1)})$.


\begin{figure}[h]
    \centering
    \includegraphics[width=14cm]{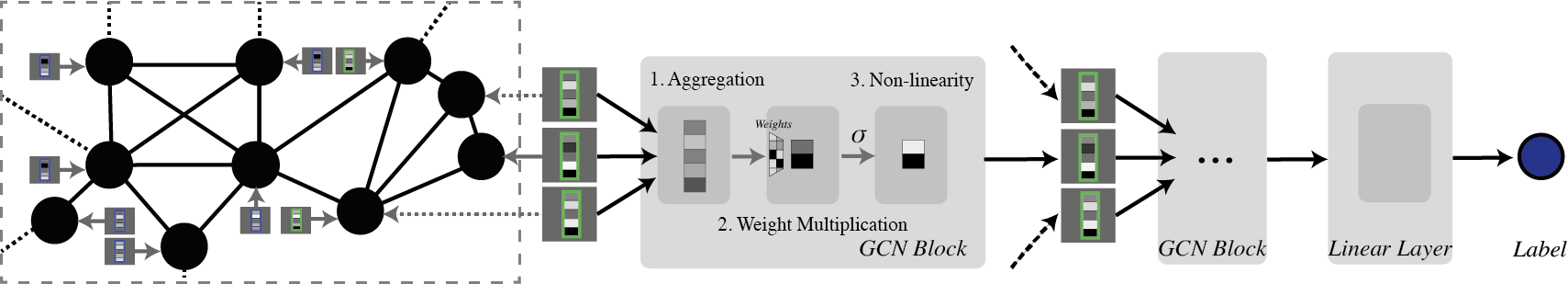}
    \caption{Graph Neural Network Block: Illustration of the aggregation (convolution) and transformation steps within each GNN block. GNNs blocks are typically concatenated to allow information to percolate from all parts of the network. }
    \label{fig:Schema}
\end{figure}

\xhdr{The convolution operator} Most existing theoretical analyses of GNNs draw a distinction between two main classes of  convolution operators $\mc{C}$\cite{zhou2020graph}:  {\it spectral operators} \cite{defferrard2016convolutional, dong2020graph,gama2018convolutional,gama2018diffusion,henaff2015deep}, that act of the eigenvectors of some version of the Graph Laplacian (defined in its unnormalized version as $L= D-A$, with $A$ the adjacency matrix and $D$ the degree matrix), and {\it spatial operators}, that recursively aggregate node features within a given neighborhood. While the dichotomy between the two is in practice often attenuated by their implementation (spectral methods usually resort to using low-order Chebychev polynomials of the Laplacian \cite{defferrard2016convolutional,shuman2011chebyshev,shuman2013emerging}, thus effectively ``localizing'' the signal), these two types of approaches have different interpretations and downstream consequences for the analysis of the data. Consequently, in this paper, we propose to focus on spatial convolutions, as popularized by the framework of Kipf et al \cite{kipf2016semi}. In this case, for each source node $u$, the aggregation operator usually consists in summing nodes features over its entire neighborhood $\mc{N}(u)$, so that the convolution becomes the function: $\mc{C}: \{X_v\}_{v\in \mc{N}(u)} \to \mc{C}(\{X_v\}_{v\in \mc{N}(u)})=SX$, where $S$ is a weight matrix. Kipf et al \cite{kipf2016semi} suggest taking $S$ to be  normalized adjacency matrix with added self-loops:  $S =  \hat{D}^{-1/2} \hat{A} D^{-1/2}$, where  $\hat{A}=A+I$  is the adjacency matrix of the graph with self-loops, $\hat{D}$ its diagonal degree matrix, and $X\in \mathbb{R}^{n \times p}$ is the feature matrix for the $n$ nodes.  The summation is crucial in preserving permutation equivariance of the neighborhood, and has been promoted as  being able to construct features that encode long-range dependencies: by repeating and stacking multiple GNN layers, information can percolate at various distances through the graph --- thus efficiently capturing information contained in both the nodes' features and the graph structure.


\xhdr{Variations on the convolution operator} While GNN methods have grown increasingly popular, the spatial convolution operator seems to have received a little less attention from the community. Table~\ref{tab:operators_all} in Appendix \ref{appendix:gnn} provides a summary of the main graph convolution operators that are currently available in the Pytorch geometric package \cite{fey2019fast} --- here taken as a proxy for the most popular convolution choices. As observed from this table, all of these operators rely on a (weighted) sum of neighborhood features, in line with the framework of Kipf et al \cite{kipf2016semi}. In fact, while GraphSage \cite{hamilton2017inductive} has been attempted using ``max'' pooling and an LSTM version of the convolution operator, the authors did not report any significant advantage of these methods over the simple sum. Similarly, in \cite{xu2018powerful}, the authors show that a simple summation is often sufficient to characterize a multiset. Thus, without too much loss of generality, we restrict our study to sum-based aggregators. In this setting, two additional main variations have appeared over the recent years: attention-based spatial convolutions (\cite{brody2021attentive,velivckovic2017graph, xie2020mgat}), which attempt to learn edge weights tailored the task at hand, and more broadly, weighted spatial convolutions (e.g. using graphon weights, \cite{parada2021graphon,ruiz2020graphon}, kernels \cite{nikolentzos2020random,feng2022kergnns} among others\cite{zhang2020wagnn}), that refine the adjacency matrix by endowing it with edge intensities. While these efforts have focused on defining ``the best edge weights'', little attention has been put on the type of convolutions that these yield. Yet, attention-based convolutions tend to be {\it row-normalized}, while the latter usually suggest weighted {\it symmetrized} adjacencies. Our purpose here is to show that this choice is not innocuous, and translates into fundamental differences in the geometry of the embedding space.

\xhdr{Prior work: studying the effect of the Convolution operator} The literature focusing on understanding the inner mechanisms behind the success of Graph Neural Networks has considerably expanded over the past few years. Most notably, an important body of theoretical work has focused on understanding the role of this convolution operator in phenomena such as oversmoothing \cite{oono2019graph,cai2020note,chen2020measuring} and oversquashing\cite{alon2020bottleneck,topping2021understanding}. In this setting, most of the analyses are led from a spectral perspective, relating the behavior of the embeddings to properties of the eigenvalues of the convolution operator.  However,to the best of our knowledge, none of these approaches have specifically focused on understanding the effect of the convolution operator on the underlying geometry of these embeddings, nor has attempted to tie these properties to topological properties of the underlying nodes.

\xhdr{Contributions} The objective of this paper is to answer three questions:\textit{(a) how does the choice of a particular convolution operator affect the organisation of the embedding space},  \textit{(b) how does it relate to the original properties (ie features and topological distances)  of the graph}, and {\it (c) what is the most appropriate convolution operator for a given dataset?} We will attempt and answer all three questions by studying two larger families of row-normalized and symmetrized convolution operators (parametrized by the variables $\alpha \in [0,1]$ and  $\beta \in \R^+$), allowing to show how the convolution operator itself is in fact tunable.  In particular, we will show different values of $\a$ and $\b$ impact the organization of the latent space (Section \ref{sec:geometry}) and the inherent geometry of the embeddings (Section \ref{sec:geometry-intrinsic}), and characterize regimes in which certain choices of operators might be more relevant than others.


\vspace{-0.3cm}
\section{Defining a family of convolution operators}\label{sec:convolutions}


To analyse the impact of the convolution operator on the embedding geometry, we define two main families of spatial convolutions:
\begin{description}[topsep=-0.5em, leftmargin=5mm]
\item[a. Symmetrized Convolutions Operators,] defined as the family of operators of the form: 
\begin{align}\label{eq:normalized}
 \mathcal{F} =\big\{ M_{\alpha, \beta} = D_{\beta}^{-\alpha} (A + \beta I) D_{\beta}^{-\alpha} \big| {\alpha \in [0,1], \beta \in [0, + \infty),  D_{\beta} =\text{diag}\big((A + \beta I) \mathbbm{1})\big) }\big\}. 
 \end{align}
Here, $ D_{\beta}$ is the degree matrix associated with the $\beta$-augmented adjacency matrix $A+\beta I$.
Note that this family is a generalization of the traditional GCN convolution $S_{GCN} = \hat{D}^{-1/2}\hat{A}\hat{D}^{-1/2}$, which corresponds here to a choice of $\alpha=0.5$ and $\beta=1$. The choice $\alpha=0.5$ and $\beta=2$ also features amongst the default implementations in Pytorch Geometric \cite{fey2019fast}. Similarly, the convolution operator indexed by  $\alpha=0, \beta=1$ corresponds to a version of the GIN convolution \cite{xu2018powerful}, and more generally, the sum-based message-passing versions of GNNs \cite{battaglia2018relational}  --- another popular set of choices for GNN algorithms. 
\item[b. Row-normalized Convolution operators], which we define the general family of the form: 
\begin{align}\label{eq:regularized}
\mathcal{M} =\big\{ D_{(\alpha, \beta)}^{-1} M_{\alpha, \beta}, \qquad \text{with } D_{(\alpha, \beta)}^{-1} = \text{diag}\big(M_{\alpha, \beta} \mathbbm{1})\big) \big|  M_{\alpha, \beta} \in \mathcal{F}\big\} 
\end{align}
This family encompasses a number of operators, such as the sum-based convolution deployed in GraphSage\cite{hamilton2017inductive} --- and, to some extent, that of GAT  \cite{velivckovic2017graph}, which considers row-normalized convolutions on a modified version of the adjacency matrix.
\end{description}
For both families of convolution operators, the parameter $\a$ impact the weight assigned to nodes with high degree: as $\alpha$ increases, high-degree nodes are increasingly penalised, so that their contribution to neighbouring node embeddings decreases (relatively to lower degree nodes). On the other hand, $\beta$ can be interpreted as capturing the amount of ``innovation'' or relevant information that the source node brings to the embedding with respect to its neighbourhood. In particular, for high values of $\beta$, the source node's contribution to the node overweighs that of the neighborhood, so the embedding is essentially identical to the source node's. Consequently,  $\beta$ allows us to interpolate between the traditional GNN regime ($\beta=1$) and the MLP setting.

\xhdr{Empirical consequences of a choice of operator} While seemingly specious, these families inherently make different assumptions on the nature of the data. More precisely, row-normalized convolutions create new embeddings that are convex combinations of the neighbors. This implicitly assumes that the data lives on some smooth Riemannian manifold, where the degree or centrality of a node corresponds to an assumption on the sampling distribution over $\mathcal{M}$: nodes with high degree correspond to ``well sampled'' areas of the manifold.  Therefore, in this setting, it is intuitively possible to get an accurate representation of the local information by averaging neighbourhood features:  the higher the degree, the higher the amount of certainty around the node's value. By contrast, symmetrized embeddings are weighted sums --- but not convex combinations --- of neighbours. Here, the sum simply plays the role of a permutation-invariant aggregation operator\cite{hamilton2020graph}, and as we will see, is able to encode topological features (e.g. structural roles \cite{donnat2018learning}). 

\xhdr{Experiments} Finally, to motivate our study before diving into more theoretical considerations, we propose to highlight the impact of the choice using standard benchmarks in the literature (we refer the reader to Appendix \ref{appendix:experiments} for an overview of the properties of these datasets).  Figure~\ref{fig:introb} highlights the impact of the value of  $\alpha$ and $\beta$ on the classification accuracy for the Cora dataset. In particular, for the symmetric operator, the value of $\alpha$ seems to drive the ``first order'' difference, and exhibits the potential to vastly influence the performance. By contrast, the choice of $\beta$ seems to affect the performance less --- unless $\beta$ becomes too big and overweighs the rest of the neighbors. This effect might be due to the significant level of homophily in the Cora dataset: in this setting, the source node's feature vector is fairly redundant with that of its neighbours. However, we show in Appendix \ref{appendix:experiments} additional examples where the impact of $\beta$ is much more substantial.  Most strikingly, the choice of $\alpha$ seems to have a significant effect on the performance of the symmetrized GCN, with a phase transition at $\alpha=0.5$: for values of $\a$ greater than this threshold, the performance drops quite substantially. Noticeably, in the ``poor'' performance region, the interaction between $\alpha$ and $\beta$ is more marked: choosing low value of $\b$  (i.e. $\beta=0$) seems to mitigate the decrease in performance. 
\begin{figure}
     \centering
          \begin{subfigure}[t]{0.37\textwidth}
              \centering
    \includegraphics[width=1.0\textwidth, height=3.8cm]{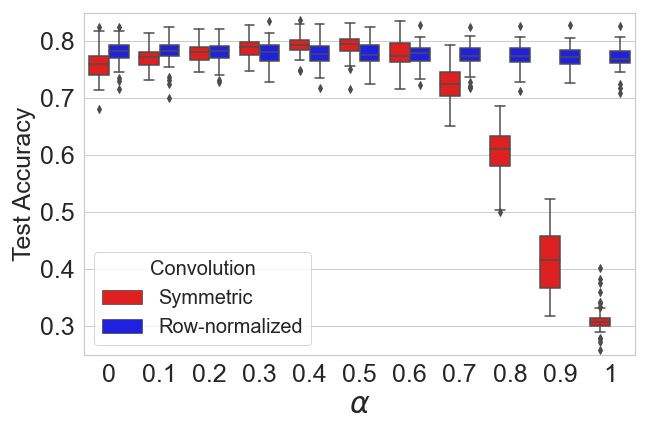}
    \caption{Accuracy as a function of $\alpha$ ($\beta=1$).}
     \end{subfigure}
\begin{subfigure}[t]{0.62\textwidth}
   \centering
    \includegraphics[width=\textwidth]{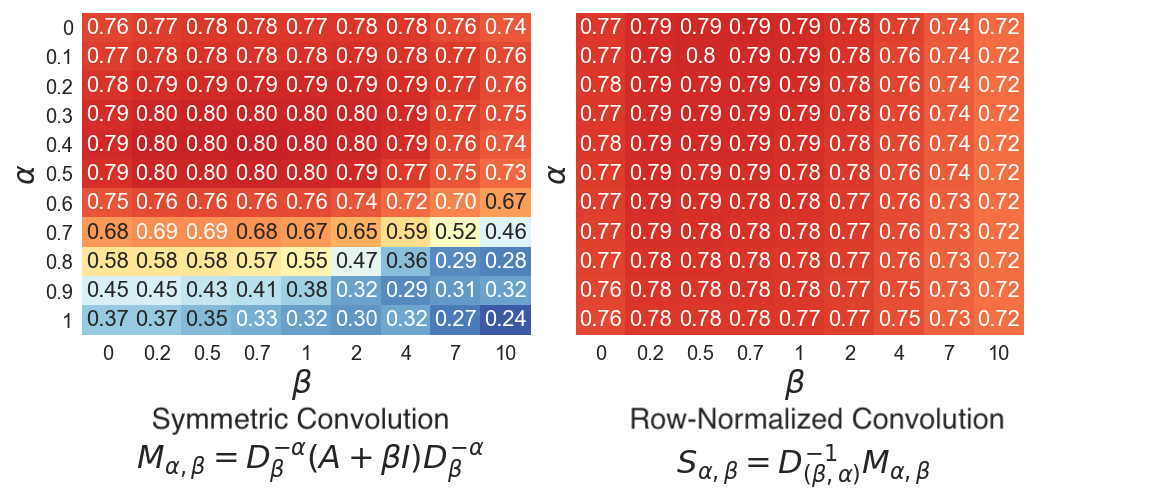}
    \caption{Cora dataset: test accuracy as a function of $\alpha$ and $\beta$}\label{fig:introb}     
    \end{subfigure}
    \caption{Results for Cora for our family of convolutions defined in Eq.\ref{eq:normalized} and Eq.\ref{eq:regularized} (50 independent experiments, selecting a random training set and test set). Note the strong dependency of the results on both $\alpha$ and $\beta$ for the normalized convolution.}
    \label{fig:intro}
\end{figure}

\begin{figure}\label{fig:intro}
     \centering
          \begin{subfigure}[t]{0.33\textwidth}
              \centering
    \includegraphics[width=1.0\textwidth, height=4cm]{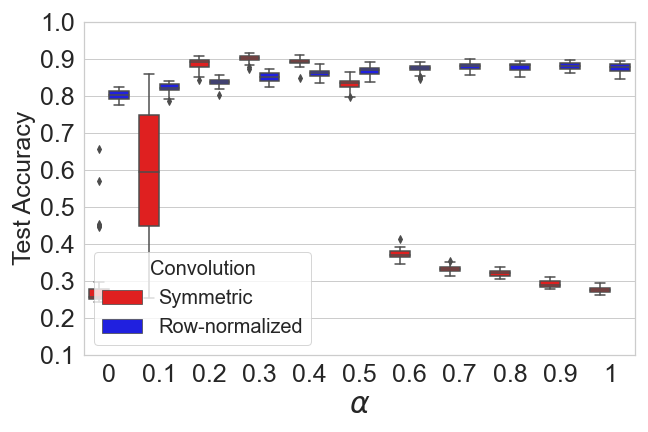}
    \caption{Amazon Photos ($h_e=0.83$)}
     \end{subfigure}
          \begin{subfigure}[t]{0.32\textwidth}
              \centering
    \includegraphics[width=1.0\textwidth, height=4cm]{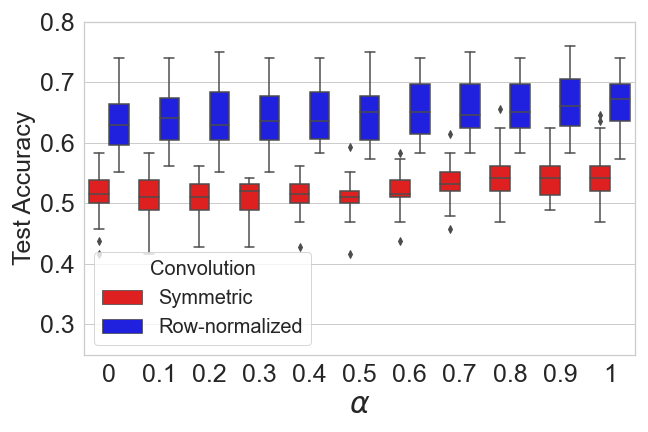}
    \caption{Cornell   ($h_e=0.30$)}
     \end{subfigure}
              \begin{subfigure}[t]{0.32\textwidth}
              \centering
    \includegraphics[width=1.0\textwidth, height=4cm]{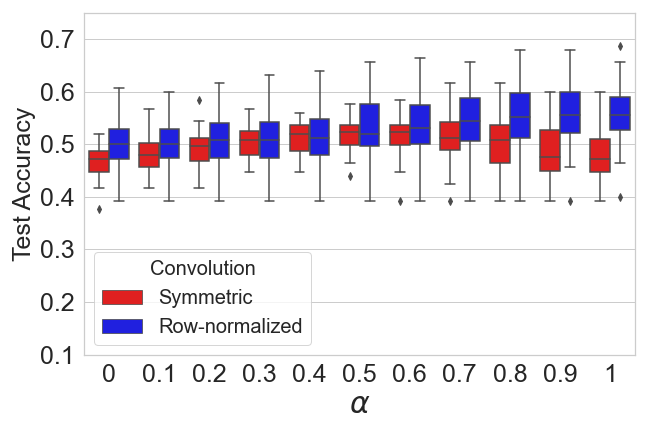}
    \caption{Wisconsin  ($h_e=0.21$)}
     \end{subfigure}
    \caption{Effect of $\alpha$ on the performance of the algorithm for our family of convolutions defined in Eq.\ref{eq:normalized} and Eq.\ref{eq:regularized} (30 independent experiments, with random training and test set). Here, $h_e$ denotes the edge homophily in the dataset (defined as the fraction of edges whose vertices share the same label) Note the strong dependency of the results on $\alpha$. See Appendix \ref{appendix:experiments} for further details and results.}
    \label{fig:intro2}
\end{figure}

Focusing on the effect of $\alpha$ (and setting $\beta=1$), we try using our convolution operators with varying intensities of $\a$ (using a GCN architecture), and report the test accuracy and standard deviation over 30 random experiments. The results are presented in Table~\ref{tab:experiments} (see Appendix \ref{appendix:experiments} for the experiment parameters). We highlight three main points: (i) $\alpha$ significantly impacts the performance of the GCN architecture;and (ii) $\a$ is tunable and should be optimized -- in fact, the default value $\alpha=0.5$ that is usually used in for the convolution operator is not necessarily the best. Choosing a value of $\alpha=0.1$ for the CoAuthor CS dataset for instance results in an average test accuracy of 93\%, a 4.37\% increase from that using the default value $\a=0.5$. Similarly, using $\alpha=0.3$ on Amazon increases the accuracy by 6.7\% compared to $\a=0.5$. Finally, we observe that (iii) the row-normalized operator is substantially more robust to the tuning of $\alpha$. These observations seem to hold across datasets, as shown in Appendix \ref{appendix:experiments}.  

\begin{table}[h]
\resizebox{\columnwidth}{!}{
\begin{tabular}{c|c|c|c|c|c|c|c|c|c|c|c|}
\cline{2-12}
\rowcolor[HTML]{C0C0C0} 
\cellcolor[HTML]{FFFFFF}\textbf{} & \textbf{Alpha} & \cellcolor[HTML]{C0C0C0} & \cellcolor[HTML]{C0C0C0} & \cellcolor[HTML]{C0C0C0} & \cellcolor[HTML]{C0C0C0} & \cellcolor[HTML]{C0C0C0} & \cellcolor[HTML]{C0C0C0} & \cellcolor[HTML]{C0C0C0} & \cellcolor[HTML]{C0C0C0} & \cellcolor[HTML]{C0C0C0} & \cellcolor[HTML]{C0C0C0} \\ \cline{1-2}
\rowcolor[HTML]{C0C0C0} 
\multicolumn{1}{|c|}{\cellcolor[HTML]{C0C0C0}\textbf{Dataset}} & \textbf{Convolution Type} & \multirow{-2}{*}{\cellcolor[HTML]{C0C0C0}\textbf{0.1}} & \multirow{-2}{*}{\cellcolor[HTML]{C0C0C0}\textbf{0.2}} & \multirow{-2}{*}{\cellcolor[HTML]{C0C0C0}\textbf{0.3}} & \multirow{-2}{*}{\cellcolor[HTML]{C0C0C0}\textbf{0.4}} & \multirow{-2}{*}{\cellcolor[HTML]{C0C0C0}\textbf{0.5}} & \multirow{-2}{*}{\cellcolor[HTML]{C0C0C0}\textbf{0.6}} & \multirow{-2}{*}{\cellcolor[HTML]{C0C0C0}\textbf{0.7}} & \multirow{-2}{*}{\cellcolor[HTML]{C0C0C0}\textbf{0.8}} & \multirow{-2}{*}{\cellcolor[HTML]{C0C0C0}\textbf{0.9}} & \multirow{-2}{*}{\cellcolor[HTML]{C0C0C0}\textbf{1.0}} \\ \hline
\multicolumn{1}{|c|}{} & \cellcolor[HTML]{FFFFFF}\textbf{Symmetric} & \cellcolor[HTML]{FFFFFF}77.02±2.13 & \cellcolor[HTML]{FFFFFF}77.85±1.7 & \cellcolor[HTML]{FFFFFF}78.91±1.78 & \cellcolor[HTML]{C0C0C0}\textbf{79.37±1.78} & \cellcolor[HTML]{FFFFFF}79.23±2 & \cellcolor[HTML]{FFFFFF}77.63±2.53 & \cellcolor[HTML]{FFFFFF}72.71±3.43 & \cellcolor[HTML]{FFFFFF}60.67±4.09 & \cellcolor[HTML]{FFFFFF}41.59±5.71 & \cellcolor[HTML]{FFFFFF}31.19±2.59 \\ \cline{2-12} 
\multicolumn{1}{|c|}{\multirow{-2}{*}{\textbf{Cora}}} & \cellcolor[HTML]{EFEFEF}\textbf{Row-Normalized} & \cellcolor[HTML]{9B9B9B}{\color[HTML]{000000} \textbf{78.12±2.27}} & \cellcolor[HTML]{EFEFEF}78.08±2.01 & \cellcolor[HTML]{EFEFEF}77.97±2.16 & \cellcolor[HTML]{EFEFEF}77.83±2.1 & \cellcolor[HTML]{EFEFEF}77.81±2.07 & \cellcolor[HTML]{EFEFEF}77.73±1.91 & \cellcolor[HTML]{EFEFEF}77.3±2.24 & \cellcolor[HTML]{EFEFEF}77.33±2.21 & \cellcolor[HTML]{EFEFEF}77.24±2.01 & \cellcolor[HTML]{EFEFEF}77.1±2.2 \\ \hline
\multicolumn{1}{|c|}{} & \cellcolor[HTML]{FFFFFF}\textbf{Symmetric} & \cellcolor[HTML]{FFFFFF}75.41±2.51 & \cellcolor[HTML]{FFFFFF}76.09±2.38 & \cellcolor[HTML]{FFFFFF}76.63±2.51 & \cellcolor[HTML]{C0C0C0}\textbf{76.79±2.48} & \cellcolor[HTML]{FFFFFF}75.4±3.81 & \cellcolor[HTML]{FFFFFF}68.27±6.82 & \cellcolor[HTML]{FFFFFF}54.62±8.52 & \cellcolor[HTML]{FFFFFF}44.11±5.36 & \cellcolor[HTML]{FFFFFF}40.57±2.13 & \cellcolor[HTML]{FFFFFF}39.9±2.09 \\ \cline{2-12} 
\multicolumn{1}{|c|}{\multirow{-2}{*}{\textbf{PubMed}}} & \cellcolor[HTML]{EFEFEF}\textbf{Row-Normalized} & \cellcolor[HTML]{EFEFEF}74.42±3.82 & \cellcolor[HTML]{EFEFEF}74.62±3.76 & \cellcolor[HTML]{EFEFEF}74.74±3.65 & \cellcolor[HTML]{9B9B9B}\textbf{74.75±3.44} & \cellcolor[HTML]{EFEFEF}74.68±3.45 & \cellcolor[HTML]{EFEFEF}74.48±3.44 & \cellcolor[HTML]{EFEFEF}74.28±3.38 & \cellcolor[HTML]{EFEFEF}73.92±3.36 & \cellcolor[HTML]{EFEFEF}73.73±3.36 & \cellcolor[HTML]{EFEFEF}73.42±3.44 \\ \hline
\multicolumn{1}{|c|}{} & \cellcolor[HTML]{FFFFFF}\textbf{Symmetric} & \cellcolor[HTML]{FFFFFF}65.06±2.74 & \cellcolor[HTML]{FFFFFF}65.5±2.57 & \cellcolor[HTML]{FFFFFF}66.51±2.49 & \cellcolor[HTML]{C0C0C0}\textbf{67.59±2.59} & \cellcolor[HTML]{FFFFFF}67.45±2.56 & \cellcolor[HTML]{FFFFFF}67.32±3.04 & \cellcolor[HTML]{FFFFFF}65.53±3.58 & \cellcolor[HTML]{FFFFFF}60.77±4.17 & \cellcolor[HTML]{FFFFFF}51.41±4.86 & \cellcolor[HTML]{FFFFFF}38.2±5.07 \\ \cline{2-12} 
\multicolumn{1}{|c|}{\multirow{-2}{*}{\textbf{Citeseer}}} & \cellcolor[HTML]{EFEFEF}{\color[HTML]{000000} \textbf{Row-Normalized}} & \cellcolor[HTML]{EFEFEF}{\color[HTML]{000000} 66.69±2.87} & \cellcolor[HTML]{EFEFEF}{\color[HTML]{000000} 66.75±2.99} & \cellcolor[HTML]{EFEFEF}{\color[HTML]{000000} 66.79±2.69} & \cellcolor[HTML]{EFEFEF}{\color[HTML]{000000} 66.99±3.07} & \cellcolor[HTML]{EFEFEF}{\color[HTML]{000000} 66.74±2.71} & \cellcolor[HTML]{EFEFEF}{\color[HTML]{000000} 66.91±2.84} & \cellcolor[HTML]{9B9B9B}{\color[HTML]{000000} \textbf{67.02±3.03}} & \cellcolor[HTML]{EFEFEF}{\color[HTML]{000000} 66.98±2.92} & \cellcolor[HTML]{EFEFEF}{\color[HTML]{000000} 66.93±2.92} & \cellcolor[HTML]{EFEFEF}{\color[HTML]{000000} 66.96±2.97} \\ \hline
\multicolumn{1}{|c|}{} & \cellcolor[HTML]{FFFFFF}\textbf{Symmetric} & \cellcolor[HTML]{C0C0C0}{\color[HTML]{000000} \textbf{93.02±0.27}} & \cellcolor[HTML]{FFFFFF}{\color[HTML]{000000} 92.93±0.25} & \cellcolor[HTML]{FFFFFF}{\color[HTML]{000000} 92.9±0.21} & \cellcolor[HTML]{FFFFFF}{\color[HTML]{000000} 92.32±0.26} & \cellcolor[HTML]{FFFFFF}{\color[HTML]{000000} 88.66±0.39} & \cellcolor[HTML]{FFFFFF}{\color[HTML]{000000} 77.88±0.89} & \cellcolor[HTML]{FFFFFF}{\color[HTML]{000000} 52.34±1.93} & \cellcolor[HTML]{FFFFFF}{\color[HTML]{000000} 24.91±0.48} & \cellcolor[HTML]{FFFFFF}{\color[HTML]{000000} 22.63±0.29} & \cellcolor[HTML]{FFFFFF}{\color[HTML]{000000} 22.63±0.29} \\ \cline{2-12} 
\multicolumn{1}{|c|}{\multirow{-2}{*}{\textbf{Coauthor CS}}} & \cellcolor[HTML]{EFEFEF}\textbf{Row-Normalized} & \cellcolor[HTML]{EFEFEF}{\color[HTML]{000000} 88.97±0.4} & \cellcolor[HTML]{EFEFEF}{\color[HTML]{000000} 89.4±0.43} & \cellcolor[HTML]{EFEFEF}{\color[HTML]{000000} 89.74±0.41} & \cellcolor[HTML]{EFEFEF}{\color[HTML]{000000} 90.01±0.45} & \cellcolor[HTML]{9B9B9B}{\color[HTML]{000000} \textbf{90.13±0.51}} & \cellcolor[HTML]{EFEFEF}{\color[HTML]{000000} 90.21±0.5} & \cellcolor[HTML]{EFEFEF}{\color[HTML]{000000} 90.29±0.5} & \cellcolor[HTML]{EFEFEF}{\color[HTML]{000000} 90.26±0.47} & \cellcolor[HTML]{EFEFEF}{\color[HTML]{000000} 90.3±0.44} & \cellcolor[HTML]{EFEFEF}{\color[HTML]{000000} 90.24±0.42} \\ \hline
\multicolumn{1}{|c|}{} & \cellcolor[HTML]{FFFFFF}\textbf{Symmetric} & \cellcolor[HTML]{FFFFFF}58.77±18.34 & \cellcolor[HTML]{FFFFFF}88.67±1.62 & \cellcolor[HTML]{C0C0C0}\textbf{90.09±1.02} & \cellcolor[HTML]{FFFFFF}89.18±1.07 & \cellcolor[HTML]{FFFFFF}83.23±1.64 & \cellcolor[HTML]{FFFFFF}37.12±1.4 & \cellcolor[HTML]{FFFFFF}33.17±0.93 & \cellcolor[HTML]{FFFFFF}32±0.91 & \cellcolor[HTML]{FFFFFF}29.05±0.94 & \cellcolor[HTML]{FFFFFF}27.47±0.86 \\ \cline{2-12} 
\multicolumn{1}{|c|}{\multirow{-2}{*}{\textbf{Amazon Photo}}} & \cellcolor[HTML]{EFEFEF}\textbf{Row-Normalized} & \cellcolor[HTML]{EFEFEF}82.09±1.34 & \cellcolor[HTML]{EFEFEF}83.75±1.13 & \cellcolor[HTML]{EFEFEF}84.97±1.34 & \cellcolor[HTML]{EFEFEF}86.15±1.26 & \cellcolor[HTML]{EFEFEF}86.75±1.44 & \cellcolor[HTML]{EFEFEF}87.33±1.18 & \cellcolor[HTML]{EFEFEF}88.01±1.11 & \cellcolor[HTML]{EFEFEF}87.9±1.2 & \cellcolor[HTML]{9B9B9B}{\color[HTML]{000000} \textbf{88.05±0.99}} & \cellcolor[HTML]{EFEFEF}87.46±1.3 \\ \hline
\multicolumn{1}{|c|}{} & \cellcolor[HTML]{FFFFFF}\textbf{Symmetric} & \cellcolor[HTML]{FFFFFF}27.06±0.61 & \cellcolor[HTML]{FFFFFF}27.43±0.64 & \cellcolor[HTML]{FFFFFF}27.74±0.61 & \cellcolor[HTML]{FFFFFF}28.27±0.54 & \cellcolor[HTML]{FFFFFF}28.58±0.57 & \cellcolor[HTML]{FFFFFF}28.75±0.52 & \cellcolor[HTML]{C0C0C0}\textbf{28.9±0.56} & \cellcolor[HTML]{FFFFFF}28.83±0.6 & \cellcolor[HTML]{FFFFFF}28.51±0.67 & \cellcolor[HTML]{FFFFFF}28.33±0.64 \\ \cline{2-12} 
\multicolumn{1}{|c|}{\multirow{-2}{*}{\textbf{Actor}}} & \cellcolor[HTML]{EFEFEF}\textbf{Row-Normalized} & \cellcolor[HTML]{EFEFEF}29.42±0.73 & \cellcolor[HTML]{EFEFEF}29.97±0.54 & \cellcolor[HTML]{EFEFEF}30.33±0.65 & \cellcolor[HTML]{EFEFEF}30.71±0.71 & \cellcolor[HTML]{EFEFEF}31±0.83 & \cellcolor[HTML]{EFEFEF}31.26±0.71 & \cellcolor[HTML]{EFEFEF}31.37±0.72 & \cellcolor[HTML]{EFEFEF}31.53±0.6 & \cellcolor[HTML]{EFEFEF}31.58±0.6 & \cellcolor[HTML]{9B9B9B}\textbf{31.63±0.6} \\ \hline
\multicolumn{1}{|c|}{} & \cellcolor[HTML]{FFFFFF}\textbf{Symmetric} & \cellcolor[HTML]{FFFFFF}51.04±4.08 & \cellcolor[HTML]{FFFFFF}51.01±3.25 & \cellcolor[HTML]{FFFFFF}51.04±2.91 & \cellcolor[HTML]{FFFFFF}51.28±3.08 & \cellcolor[HTML]{FFFFFF}51.25±3.12 & \cellcolor[HTML]{FFFFFF}52.29±3.06 & \cellcolor[HTML]{FFFFFF}53.65±3.21 & \cellcolor[HTML]{FFFFFF}54.38±4.17 & \cellcolor[HTML]{C0C0C0}\textbf{54.55±3.97} & \cellcolor[HTML]{FFFFFF}54.55±4.65 \\ \cline{2-12} 
\multicolumn{1}{|c|}{\multirow{-2}{*}{\textbf{Cornell}}} & \cellcolor[HTML]{EFEFEF}\textbf{Row-Normalized} & \cellcolor[HTML]{EFEFEF}63.78±4.39 & \cellcolor[HTML]{EFEFEF}64.13±4.8 & \cellcolor[HTML]{EFEFEF}63.96±4.53 & \cellcolor[HTML]{EFEFEF}64.65±4.51 & \cellcolor[HTML]{EFEFEF}64.83±4.74 & \cellcolor[HTML]{EFEFEF}65.52±4.65 & \cellcolor[HTML]{EFEFEF}65.87±4.72 & \cellcolor[HTML]{EFEFEF}65.94±4.6 & \cellcolor[HTML]{9B9B9B}\textbf{66.46±4.5} & \cellcolor[HTML]{EFEFEF}66.39±4.61 \\ \hline
\multicolumn{1}{|c|}{} & \cellcolor[HTML]{FFFFFF}\textbf{Symmetric} & \cellcolor[HTML]{FFFFFF}48.03±3.69 & \cellcolor[HTML]{FFFFFF}49.39±3.5 & \cellcolor[HTML]{FFFFFF}50.29±3.04 & \cellcolor[HTML]{FFFFFF}51.09±3.15 & \cellcolor[HTML]{9B9B9B}\textbf{51.79±3.1} & \cellcolor[HTML]{FFFFFF}51.6±3.65 & \cellcolor[HTML]{FFFFFF}50.8±4.59 & \cellcolor[HTML]{FFFFFF}49.92±5.04 & \cellcolor[HTML]{FFFFFF}49.07±5.16 & \cellcolor[HTML]{FFFFFF}48.21±4.75 \\ \cline{2-12} 
\multicolumn{1}{|c|}{\multirow{-2}{*}{\textbf{Wisconsin}}} & \cellcolor[HTML]{EFEFEF}\textbf{Row-Normalized} & \cellcolor[HTML]{EFEFEF}50.24±4.88 & \cellcolor[HTML]{EFEFEF}51.04±5.38 & \cellcolor[HTML]{EFEFEF}51.28±5.77 & \cellcolor[HTML]{EFEFEF}52.03±6.07 & \cellcolor[HTML]{EFEFEF}52.91±6.32 & \cellcolor[HTML]{EFEFEF}53.6±5.82 & \cellcolor[HTML]{EFEFEF}54.4±6.01 & \cellcolor[HTML]{EFEFEF}55.28±6.06 & \cellcolor[HTML]{EFEFEF}55.76±6.01 & \cellcolor[HTML]{9B9B9B}\textbf{56.19±5.84} \\ \hline
\end{tabular}
}
\caption{Results across 30 different experiments across datasets.}
\label{tab:experiments}
\end{table}

We emphasize here that the scope of this paper is not to suggest another convolution operator that would achieve state-of-the-art results. Rather, through this series of experiments, we hope to have convinced the reader that, empirically, the choice of convolution is important and can help gain up to almost 7\% accuracy on traditional GNN approaches, with no modifications to the architecture of the network whatsoever. Motivated by these observations, the  rest of this paper focuses on analysing these convolution operators' characteristics, and subsequent impact on the geometry properties of the embedding space.

\vspace{-0.3cm}
\section{The geometrical organization of GNN embeddings in latent space}\label{sec:geometry}

In this section, we analyse the effect of the convolution operator on the overall organization of the latent space. Our objective is to (a) characterize the implicit constraints that these operators put on the geometry (we will see that nodes with different topological characteristics are pushed to concentrate in different parts of the embedding space), and (b) identify the downstream consequences in terms of performance. This discussion will be driven by considerations on nodes' topological characteristics --- rather than spectral arguments.
\vspace{-0.3cm}



\subsection{Symmetric Convolutions} 

We begin our study of the ``absolute'' latent geometry of our embeddings with the family of symmetric convolution $\mc{M}_{\alpha, \beta}$ (see Equation~\ref{eq:normalized}). Contrary to many theoretical analysis, our study here is based on an analysis of the last (linear) layer of the Graph Neural Network (see Equation~\ref{eq:last_layer}), since it has the benefit of being a simple linear layer on the node embeddings (since introduction), denoted as:
$$ H^{(K)} = S\s( H^{(K-1)}) = \sum_{v \in \mc{N}(u) \cup\{u\}} \frac{A_{uv}}{(d_u +\beta)^{\alpha}(d_v +\beta)^{\alpha}} Z_{v\cdot}  $$
where $ \mc{N}(u)$ denotes the neighborhood of node $u$, $A_{uv}$ is the (possibly weighted) adjacency matrix, with diagonal equal to $\beta$, and $Z_{v\cdot}  = \s( H^{(K-1)}_{v\cdot})$. In the rest of the paper, for ease of notation, we adopt the notation $\tilde{\mc{N}}(u)= \mc{N}(u)\cup \{u\}$.

As a first step to study of the effect of the choice of convolution operator on the latent embedding space, we propose the following lemma.
\begin{lemma}\label{lemma:dis}
The last layer  of the GNN is a simple linear layer on inputs such that for any node $u$:
 \begin{align}
  ||S Z||_2 \leq { ||Z||_{2,\infty}}  \Big ( ( d_u + \b)^{1-2\alpha} -  \alpha  \frac{\bar{\Delta}_u}{(d_u + \beta)^{2\alpha}} + \frac{\alpha(\alpha+1)M}{2}  \frac{\overline{\Delta^2}_u}{(d_u + \beta)^{1 + 2\alpha}} \Big) \label{eq:ineq3}
\end{align}
where $\bar{\Delta}_u$ (respectively $\overline{\Delta^2}_u$) are the weighted averages of the degree differences (respectively, squared degree differences): 
$$ \bar{\Delta}_u = \frac{ \sum_{v \in \tilde{\mc{N}}(u)} A_{uv} (d_v -d_u)} {d_u + \beta} \qquad \overline{\Delta^2}_u = \frac{ \sum_{v \in  \tilde{\mc{N}}(u)} A_{uv} (d_v -d_u)^2} {d_u + \beta}.$$  In this equation, $||Z||_{2, \infty} = \max_{v} ||Z_v||_2$, and $M =\frac{d_{\max} +\beta }{\beta +1})^{2}$, where $d_{\max}$ denote the maximal degree of the nodes in the network.
\end{lemma}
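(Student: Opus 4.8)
The plan is to reduce the vector inequality to a scalar one and then control the resulting degree-dependent sum by a second-order Taylor expansion in the degree differences. First I would apply the triangle inequality to the row $(SZ)_u = \sum_{v \in \tilde{\mc{N}}(u)} \frac{A_{uv}}{(d_u+\beta)^{\alpha}(d_v+\beta)^{\alpha}} Z_{v\cdot}$ and pull $\|Z_{v\cdot}\|_2 \le \|Z\|_{2,\infty}$ out of every summand. This leaves the purely combinatorial quantity $T_u = \sum_{v \in \tilde{\mc{N}}(u)} \frac{A_{uv}}{(d_u+\beta)^{\alpha}(d_v+\beta)^{\alpha}}$, and the whole lemma reduces to showing that $T_u$ is bounded by the bracketed expression.

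Next I would expose the degree differences $\delta_v := d_v - d_u$ by writing $d_v+\beta = (d_u+\beta)\bigl(1+\tfrac{\delta_v}{d_u+\beta}\bigr)$ and factoring $(d_u+\beta)^{-2\alpha}$ out of the sum, so that $T_u = (d_u+\beta)^{-2\alpha}\sum_v A_{uv}\,\phi\bigl(\tfrac{\delta_v}{d_u+\beta}\bigr)$ with $\phi(t)=(1+t)^{-\alpha}$. I would then Taylor-expand $\phi$ to second order about $t=0$ with Lagrange remainder, $\phi(t) = 1 - \alpha t + \tfrac{\alpha(\alpha+1)}{2}(1+\xi)^{-\alpha-2}t^2$ for some $\xi$ between $0$ and $t$. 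The zeroth-order contribution is $\sum_{v\in\tilde{\mc{N}}(u)} A_{uv}$, which, because the augmented adjacency carries the self-loop weight $\beta$ on its diagonal, equals exactly $d_u+\beta$; after the prefactor this produces the leading term $(d_u+\beta)^{1-2\alpha}$. The first-order contribution is $-\alpha\,\tfrac{\sum_v A_{uv}\delta_v}{d_u+\beta} = -\alpha\bar{\Delta}_u$, which after the prefactor gives $-\alpha\,\bar{\Delta}_u/(d_u+\beta)^{2\alpha}$, matching the second term.

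It then remains to control the second-order remainder $\frac{\alpha(\alpha+1)}{2}\sum_v A_{uv}(1+\xi_v)^{-\alpha-2}\frac{\delta_v^2}{(d_u+\beta)^2}$. Recognizing that $\sum_v A_{uv}\delta_v^2 = (d_u+\beta)\,\overline{\Delta^2}_u$, the only remaining task is to replace the curvature factor $(1+\xi_v)^{-\alpha-2}$ by the uniform constant $M$; collecting everything then reproduces the third term $\frac{\alpha(\alpha+1)M}{2}\,\overline{\Delta^2}_u/(d_u+\beta)^{1+2\alpha}$.

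I expect this last replacement to be the crux. When $\delta_v\ge 0$ the argument satisfies $1+\xi_v\ge 1$, so the factor is at most one; the binding case is $\delta_v<0$, where $1+\xi_v \ge 1+\tfrac{\delta_v}{d_u+\beta} = \tfrac{d_v+\beta}{d_u+\beta}$. Here I would lower-bound the ratio uniformly using $d_v \ge 1$ for every $v\in\tilde{\mc{N}}(u)$ and $d_u\le d_{\max}$, which gives $1+\xi_v \ge \tfrac{\beta+1}{d_{\max}+\beta}$ and hence a curvature bound of the form $\bigl(\tfrac{d_{\max}+\beta}{\beta+1}\bigr)^{\alpha+2}$. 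The main obstacle is precisely analytic rather than structural: reconciling this worst-case factor with the stated constant $M=\bigl(\tfrac{d_{\max}+\beta}{\beta+1}\bigr)^{2}$ requires using $\alpha\le 1$ to pin down the exponent and absorbing the residual dependence into $M$, so that the Lagrange remainder is controlled degree-uniformly without letting the constant blow up. This is the one step where the extremal degree $d_{\max}$ and the self-loop strength $\beta$ genuinely enter, and where the bound is most delicate.
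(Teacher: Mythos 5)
Your proposal is correct and takes essentially the same approach as the paper's proof: triangle inequality pulling $\|Z\|_{2,\infty}$ out of each summand, a second-order Maclaurin expansion of $(1+t)^{-\alpha}$ with Lagrange remainder, and a uniform bound on the curvature factor $(1+\xi_v)^{-\alpha-2}$ via $d_v \geq 1$ and $d_u \leq d_{\max}$. Your flagging of the constant as the delicate step is apt: the bound $\left(\frac{d_{\max}+\beta}{\beta+1}\right)^{\alpha+2}$ you derive is exactly what the paper's own argument yields before it is loosened in the appendix to $(d_{\max}+\beta)^{2+\alpha}$, which is itself inconsistent with the $M=\left(\frac{d_{\max}+\beta}{\beta+1}\right)^{2}$ stated in the lemma.
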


\begin{proof}
The proof is simple (see  Appendix \ref{appendix:extrinsic}), and relies on the triangle inequality coupled with a MacLaurin expansion of the function $d_v \to \frac{1}{(d_u +\beta)^{\alpha}(d_v +\beta)^{\alpha}} $ around $0$. 
\end{proof}

Note that this bound is not necessarily tight. In particular, the proof relies on an application of  the triangular inequality, along with H{\"o}lder's inequality to separate the convolution from the embeddings.   However, while potentially crude, this bound already allow us to shed more light into the behaviour of the embedding as a function of the parameters $\a$, $\b$, and their topology. In particular, this bound allows to highlight:
\begin{description}[noitemsep,leftmargin=0.5cm, topsep=0em]
\item[(a) The determining role of $\alpha$ in organizing the latent space.] The leading term in inequality \ref{eq:ineq3} is $d_u^{1-2\a}$, and offers a first explanation for the change of phase we have observed in some of our experiments in the previous section. For values of $\alpha<0.5$, this term is an increasing function of the degree: after even a single convolution, nodes with small degree have less leeway to spread, and will generally remain close to the origin; high-degree nodes, on the other hand, will be less limited, and able to spread to far greater radii. Conversely, if $\alpha>0.5$, the upper bound decreases for nodes with high degree --- forcing them to concentrate around the origin.  The parameter $\a$ therefore controls the "attraction" of nodes towards the origin of as a function of their degree.
\item[(b) The effect of $\beta$.] The coefficient $\beta$, on the other hand, acts as added mass to the degree $d_u$ and can be understood as the "strength" of the attraction: for $\a>0.5$, the attraction of high-degree nodes to the origin is an increasing function of $\b$. Conversely, for $\a<0.5$, the repulsion of the nodes from the origin is an increasing function of $\beta$. 
\item[(c) The influence of the surrounding topology] As previously exhibited, the node degree plays a defining role in "placing" the node in concentric circles around the origin. The bound also exhibits a dependency on the neighborhood topology through the terms $\bar{\Delta}_u$ and $\bar{\Delta^2}_u$. Therefore, the more topologically homogeneous the neighborhood, the lesser the impact of these extra terms. Moreover, as shown in the proof, $\beta$ also controls the variable $M$: the higher the $\beta$, the lesser the impact of the neighborhood.
\end{description}
\vspace{-0.1cm}

\xhdr{Consequences} The previous observations yield two main conclusions. First, the choice of the convolution vector drives the geometry of the embedding space: values of $\beta$ and $\alpha$ allow the embedding space to expand or contract around the origin. By induction, in a similar fashion to \cite{oono2019graph}, it is possible to show that consecutive convolutions followed by 1-Lipschitz (e.g. ReLU) non-linearities contract even further the embedding towards the origin.\\
The second consequence pertains the accuracy of the recovery. Since the last layer of GNN is a linear classifier, we can use known results from statistical theory about the influence of the different points on the performance of the algorithm.  In particular, in linear regression, it is known that high-leverage points (that is, points with ``extreme '' predictor values'') are more likely to be highly influential points \cite{weisberg2005applied} (the same follows for generalized linear models, with some nuances). As such, by preventing high-degree nodes (respectively low-degree) to take on extreme embedding value and concentrating them around the origin, the convolution operator is implicitly limits the amount of trust, or leverage, that these points may have. We summarize these observations in the following corollary.

\begin{corollary}[Effect of symmetric convolutions on node embeddings]
In networks with non-homogeneous degree distributions, the exponent $\alpha$ constrains the leverage associated to each of the embeddings as a function of their degree and local topology:
\begin{itemize}[noitemsep, topsep=-1em, leftmargin=0.5cm]
    \item{\bf For values of $\a>0.5$:} High-degree nodes are constrained to concentrate around the origin, allowing the performance of the algorithm be driven by more peripheral (low-degree) nodes.
      \item{\bf For high-values of $\a< 0.5$:} Low-degree nodes need lie closer to the origin than their high-degree counterparts, thereby allowing high-degree nodes to have higher leverage and potentially become more influential.
\end{itemize}
\end{corollary}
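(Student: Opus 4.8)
The plan is to read the corollary directly off the leading-order behaviour of the bound in Lemma~\ref{lemma:dis}, and then to translate the resulting control on the embedding norm into a statement about leverage via standard regression diagnostics. The deterministic content is entirely a monotonicity argument; the statistical content is a qualitative appeal to the theory of influential points.

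First I would isolate the dominant term $(d_u+\b)^{1-2\a}$ in inequality~\eqref{eq:ineq3} and treat the whole right-hand side as a function $\phi(d_u)$ of the node degree. In a topologically homogeneous neighbourhood the correction terms involving $\bar{\Delta}_u$ and $\overline{\Delta^2}_u$ are negligible (they vanish exactly when every neighbour shares the degree of $u$, as noted in point (c) above), so the behaviour of $\phi$ is dictated by the leading power. Differentiating $d\mapsto (d+\b)^{1-2\a}$ shows that its sign is governed entirely by $1-2\a$: the map is strictly increasing when $\a<\tfrac12$, strictly decreasing when $\a>\tfrac12$, and constant at the critical value $\a=\tfrac12$. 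This reproduces the phase transition observed empirically in Section~\ref{sec:convolutions} and pins down the two regimes of the corollary.

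The key interpretive step is that \eqref{eq:ineq3} is an \emph{upper} bound, and therefore acts asymmetrically on the two regimes --- which is precisely why the two bullet points are not phrased symmetrically. When $\a>\tfrac12$, $\phi(d_u)$ decreases in $d_u$, so a large degree forces $\|SZ\|_2$ to be small: high-degree embeddings are genuinely compressed toward the origin, leaving only low-degree (peripheral) nodes with the room to take extreme values. When $\a<\tfrac12$, $\phi(d_u)$ increases in $d_u$, so the same argument now forces low-degree nodes toward the origin, whereas high-degree nodes are merely \emph{permitted} --- not forced --- to spread to larger radii; hence the weaker ``potentially become more influential'' wording. I would make this distinction explicit, since a small upper bound is a true constraint while a large upper bound is only a relaxation.

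Finally, to link embedding norm to leverage I would use that the last layer is a (generalized) linear model on the embeddings $SZ$ (Equation~\eqref{eq:last_layer}), so the leverage of node $u$ is $h_{uu}=(SZ)_u^{\top}\bigl(Z^{\top}S^{\top}SZ\bigr)^{-1}(SZ)_u$, which grows with the embedding norm and is maximised by points whose predictor values are extreme relative to the design \cite{weisberg2005applied}. Capping $\|SZ\|_2$ therefore caps the leverage a node can exert, and the degree-dependent monotonicity of $\phi$ transfers directly to the degree-dependent leverage constraints claimed. The main obstacle is this last bridge: $h_{uu}$ depends on the full design $Z^{\top}S^{\top}SZ$ and not on $\|(SZ)_u\|_2$ alone, so the connection is qualitative rather than a sharp inequality. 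I would present it as a heuristic consequence supported by the classical diagnostics literature and keep the rigorous part confined to the monotonicity of the norm bound, which is all the corollary actually needs.
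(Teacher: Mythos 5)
Your proposal matches the paper's own argument essentially step for step: the paper likewise reads the two regimes off the leading term $(d_u+\b)^{1-2\a}$ of inequality~\eqref{eq:ineq3} in Lemma~\ref{lemma:dis}, and then converts the resulting cap on embedding norms into a qualitative leverage statement by appealing to classical influence diagnostics for (generalized) linear models applied to the final linear layer, exactly as you do. Your two refinements --- making explicit that an upper bound constrains only in the shrinking regime (so high-degree nodes with $\a<0.5$ are merely \emph{permitted}, not forced, to spread, matching the corollary's ``potentially'' wording) and the honest caveat that $h_{uu}$ depends on the full design rather than $\|(SZ)_u\|_2$ alone --- are accurate sharpenings of the paper's informal discussion rather than a different route.
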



\xhdr{Experiments} To illustrate these bounds and check their validity, we perform a set of synthetic experiments. We generate a set of four cliques on 20 nodes (the ''hubs''). To each node in each clique, we add a link to a sparse Barabasi-Albert network on 10 nodes ('the periphery'), with parameter $m=1$, so that the average degree of the periphery is low ($\approx 1$). The peripheries are endowed with the same class label as their associated hub. Finally, we ensure that the network is connected by randomly connecting the hubs together (one new random link per hub). To generate node features, we take the first $k=4$ features to be the one-hot label vector, to which we concatenate 16 additional ``dummy features''(random Gaussians). We finally add Gaussian noise with scale $\s^2=4$ entrywise: the result is  a feature vector that is only weakly indicative of the class.  The trained embeddings are presented in Figure \ref{fig:exp_part2}. Note the lack of separability of the different classes based on their raw feature vectors, as captured by the PCA plot in the left column. As expected from our bounds, we observe an inversion of the geometry around the origin as $\a$ increases: high-degree nodes shift from the outskirts of the plot to being concentrated around the origin as $\a$ increases. We also refer the reader to Appendix~\ref{sec:app-results-degree}, in which we also verify these phenomena in standard datasets by providing PCA and UMAP plots of the corresponding node embeddings.

In order to test corollary~\ref{lemma:dis}, we modify this setting slightly, and now let the variance of the noise depend on the degree  $d_u$ of the node: $\s^2_u=e^{ 3 (-1.5 + \log(d_u))}$. This means that low degree nodes here have very small variance $\s<1$, while high-degree noise are extremely noisy $\s \approx 9$. Consequently, we expect that the geometries in which the high-degree nodes are placed on the outskirts (and have more leverage), and low-degree nodes are constrained to lie close to the origin will perform poorly. Conversely, we flip this scenario (we choose $\s^2_u=e^{ 3 (-1.5 + \log(d_{(n-u)})}$ where, if $d_u$ has rank $rk(u)$,  $d_{(n-rk(u))}$ is the degree of the  $n-rk(u)$ largest node), and expect the opposite phenomenon. The results are shown in Figure\ref{fig:exp_part2a}(a), and are well aligned with our expectations.


\begin{figure}
        \centering
    \includegraphics[width=\textwidth]{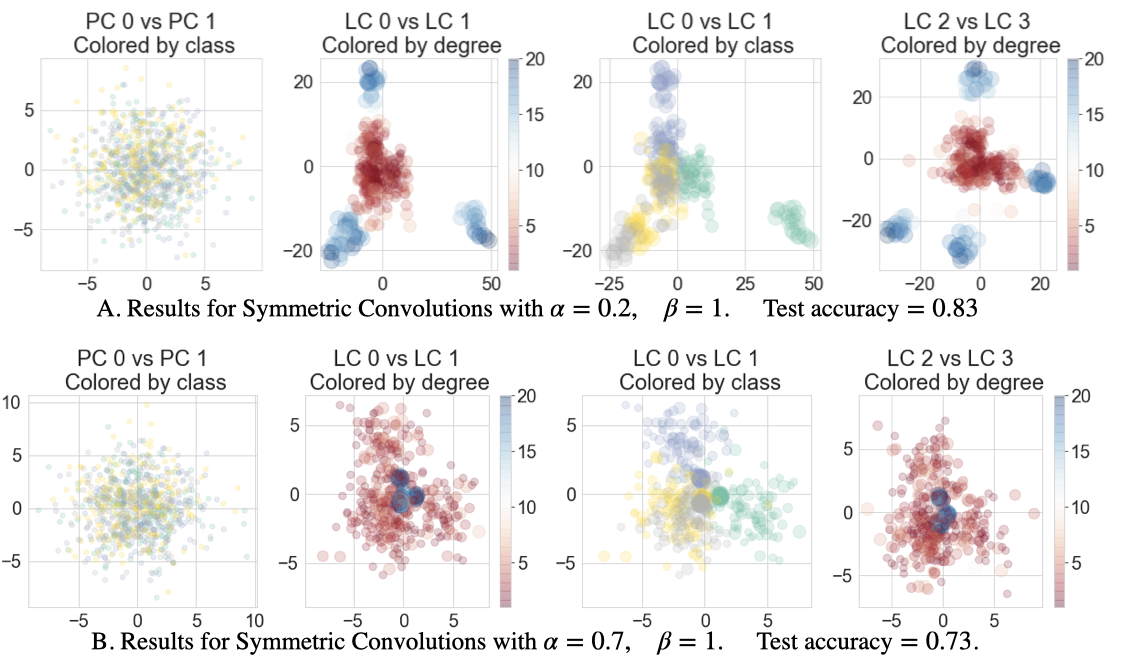}
    \caption{Symmetric Embeddings, plotted using the first two principal components (left), and the raw latent embeddings (or `latent components', shown in the right three plots on each row). Note the inversion: the high degree nodes migrate from the periphery of the latent space ($\a=0.2$) to the origin $(\a=0.7)$. See Appendix \ref{appendix:experiments} for the equivalent plot for row-normalized Embeddings.}
    \label{fig:exp_part2}
\end{figure}
\vspace{-0.2cm}

\subsection{Row-normalized Convolutions}
\vspace{-0.2cm}
Let us now turn to the case of row-normalized convolutions. In this case, the convolutions write as:
$$ SX =\frac{\frac{1}{(d_u+\b)^{\a}} \sum_{v \in \mc{N}(u) \cup\{u\}} \frac{1}{(d_v+\b)^{\a}}X_{v}}{\sum_{v \in \mc{N}(u)} \frac{1}{(d_u+\b)^{\a}}  \frac{1}{(d_v+\b)^{\a}} } =\frac{1}{\sum_{v \in \mc{N}(u)} \frac{1}{(d_v+\b)^{\a}} } \sum_{v \in \mc{N}(u) \cup\{u\}} \frac{1}{(d_v+\b)^{\a}}X_{v}$$
The embedding is thus no longer a function of the degree of the node. Rather, it lies within the convex hull of its neighbours, whose contributions are inversely proportional to their degrees. This effect can be strengthened through different choices of $\a$: the decay of a neighbor's contribution is an increasing function of $\a$. The latter can be compared to an form of attention that effectively filters out nodes with high-degree: here the discounting procedure is not learned, but imposed ahead of time. Appendix \ref{sec:app-additional} and Figure\ref{fig:exp_part2a}(b) show the result of the same experiments than in the last subsection. As in the previous part, the results are less dependent on the value of $\a$.

\begin{figure}
\begin{subfigure}[b]{0.49\textwidth}
\centering
\includegraphics[width=\textwidth]{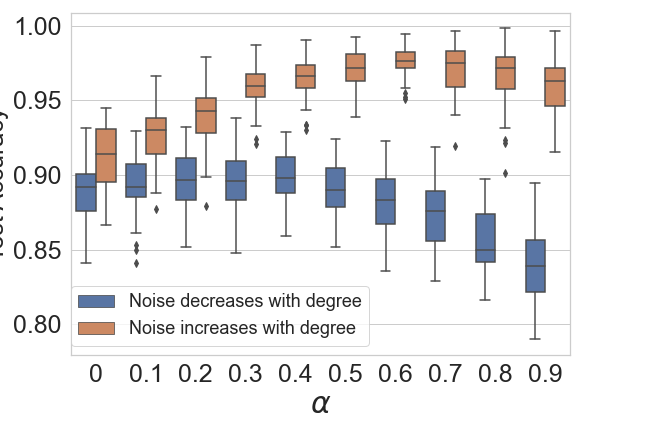}
\caption{Results for symmetric convolutions.}
    \label{fig:exp_part2a}
\end{subfigure}
\begin{subfigure}[b]{0.49\textwidth}
\centering
\includegraphics[width=\textwidth]{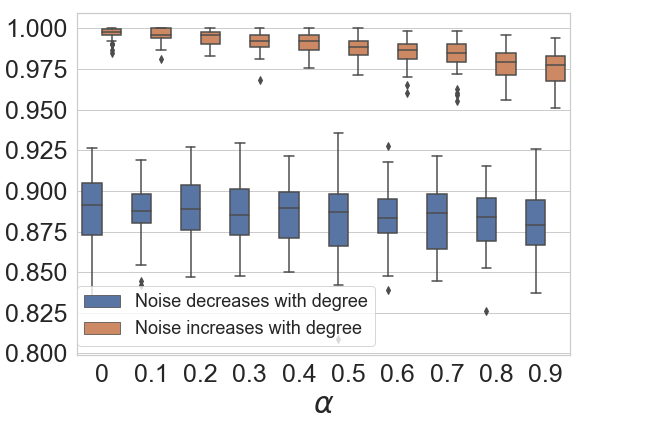}
\caption{Results for row-normalized convolutions.}
    \label{fig:exp_part2a}
\end{subfigure}
\caption{Results for 50 independent trials of our second experiment using symmetric (left) and row-normalized (right) convolutions. Note that the accuracy of the symmetric convolutions increases  $\a$ increases when the noise is higher in high-degree nodes, and decreases in the opposite scenario.}
\end{figure}

\vspace{-0.3cm}
\section{The inherent geometry of node embeddings}\label{sec:geometry-intrinsic}
\vspace{-0.3cm}

We now turn to the analysis of the evolution of the relative distances between embedding points.

 \xhdr{Embedding structural features} The  first point that we would like to raise in this section is that the two families of convolution operators differ radically in their encoding of topological features.o illustrate this, consider a simple two-layer GCN such as the one suggested by Kipf et al \cite{kipf2016semi}.  In this case, for the directions in which the term is positive,  the embedding $H$ (i.e., the transformed features that are being fed into the last linear layer) can be written as:
 \vspace{-0.35cm}
\begin{align}\label{eq:emb}
H_{u\cdot}= \sum_{k=1}^d \sum_{\substack{
        v \in \tilde{\mc{N}}(u)  \\
        (S XW + b)_{vk} \geq 0
    }} \big( S_{uv}(S XW)_{vk}W^{(2)}_{k\cdot} + S_{uv}b_{vk}W^{(2)}_{k\cdot} \big)\end{align} 
    The embedding is thus the sum of two components: a function of a (subset of) neighboring node features and a term that has the potential to encode local topology. To see why this is the case, consider a scenario where nodes in $\mc{N}(u)$ are all such that $ (S XW + b)_{vk} \geq 0$ for all $k$ or $ (S XW + b)_{vk} < 0$ for all $k$. Denote $\tilde{A}(u) = \{ v \in \tilde{N}(u):\quad (S XW + b)_{vk} \geq 0 \quad \text{for all } k \}$. In this case, Equation \ref{eq:emb} becomes:
$H_{u\cdot}= \sum_{
        v \in {\tilde{A}(u)}} \big( S_{uv}(S X\tilde{W})_{v\cdot} + S_{uv}\tilde{b}_{v\cdot} \big)$, with $\tilde{b} = bW^{(2)}$ and $\tilde{W}=WW^{(2)}$.
 
Using this notation, it becomes clear that:
\begin{itemize}[noitemsep, topsep=-0.8em, leftmargin=0.5cm]
    \item {\bf For row-normalized embeddings}, for two nodes to be close, it is sufficient for their neighborhood to have similar features. the distance between two features can be related to an ANOVA test between two neighborhood, and is devoid of topological information:
    $$ ||H_{u\cdot} - H_{u'\cdot}||^2 = \sum_{j=1}^p \Big(  \overline{(S X\tilde{W})_{\cdot j}}_{\tilde{A}(u)} -  \overline{(S X\tilde{W})_{\cdot j}}_{\tilde{A}(u')} \Big)^2,$$
    where $\overline{X}_{\tilde{A}(u)}$ denote the mean of $X$ over the set $\tilde{A}(u)$. Consequently, these  embeddings are appropriate when presumably, topology does not carry information relevant to the prediction.
       \item {\bf For symmetric embeddings}, on the other hand, the embedding depends strongly on the topology:
\[
           H_{uk}= \sum_{
        v \in {\tilde{A}(u)}} \Big(\sum_{
        w \in \mc{N}(v)} \frac{(X\tilde{W})_{wk} }{(d_u + \beta)^{\a}(d_v + \beta)^{2\a} (d_w + \beta)^{\a}}\Big) + \sum_{
        v \in {\tilde{A}(u)}} \frac{ \tilde{b}_{k}}{(d_u + \beta)^{\a}(d_v + \beta)^{\a}}
\]
In this case, note the existence of a bias term that is a function of both the source node's degree and that of its neighbours: this bias term can be seen as an offset that differentiates between high-degree nodes and low-degree nodes.
\end{itemize}

 Let us try and formalize this more precisely through two toy examples that allow us to control for feature similarity and node similarity separately.

\xhdr{Toy example 1: Identical Topologies, Different features} 
Consider two nodes $u$ and $v$ have structurally similar neighborhoods (i.e. there exists a mapping $\phi$ that transforms each node in the neighborhood of $v$ into its corresponding one in the neighborhood of $u$, see Figure~\ref{fig:str_eq}), but whose feature vectors are different. Mathematically, we write:
\begin{align*}
\forall j \in N(v), \quad  X_{j\cdot} = X_{\phi(j)\cdot} +\epsilon, \qquad \e_{jk} \overset{\text{i.i.d}}{\sim}N(0,\s^2).  \end{align*}
\begin{figure}
     \centering
\includegraphics[width=\textwidth]{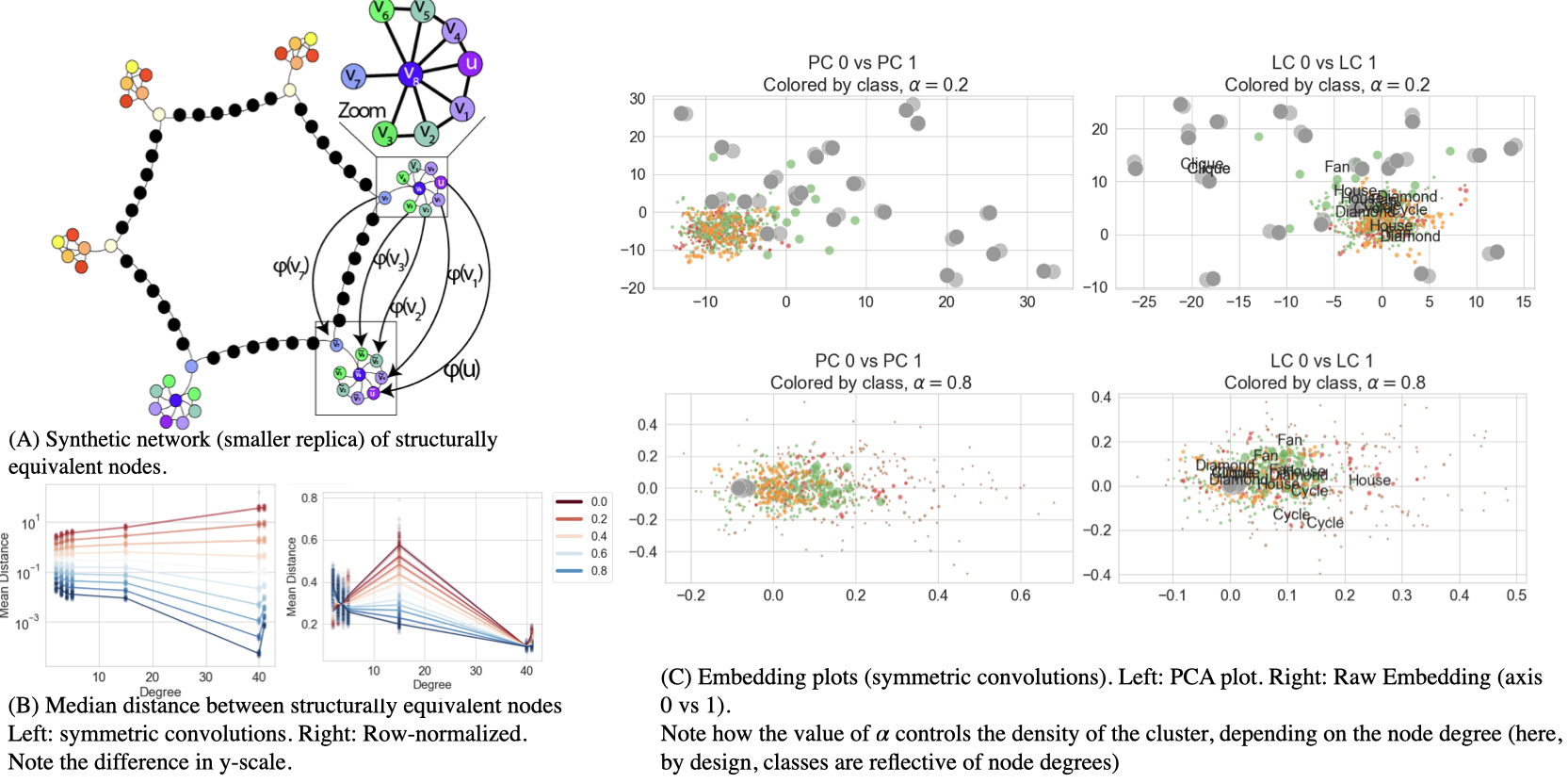}
\caption{Results for our Structural Equivalents experiment}\label{fig:str_eq}
\end{figure}
\vspace{-0.3cm}

\begin{lemma} \label{lemma:inh}
For symmetric convolution, with probability at least $1-\delta$, with $M$ as in \ref{lemma:dis}, we have: 
\begin{align*}
||H_u -H_{u'}||^2  \leq \mu +  2 \sqrt{2}\s||W||_2(d_u +\beta)^{1-2\a} \sqrt{1 +  2\a |\overline{\Delta}_u |({d_u + \beta})^{-1} +  \a(2\a+1) M} \log(1/\delta) \end{align*}
  where $\mu = \sigma^2||W||^2 \Big( (d_u +\beta)^{2-4\a} +  2\a |\overline{\Delta}_u |({d_u + \beta})^{-4\alpha} +  \a(2\a+1) M \overline{\Delta^2}_u({d_u + \beta})^{-1-4\alpha}\Big).$
Conversely, for row-normalized embeddings:
\vspace{-0.2cm}
\begin{align*}
||H_u -H_{u'}||^2 
  & \leq \mu +  2 \sqrt{2}\s||W||_2\sqrt{\sum_{v \in \tilde{\mathcal{N}}(u)} \frac{1}{(d_v +\beta)^{2\a}}} \log(1/\delta),  \quad  \mu = \frac{\sigma^2||W||^2}{\sum_{v \in \tilde{\mathcal{N}}(u)} (d_v +\beta)^{-2\a}} \frac{1}{1+\beta}  \end{align*}
\end{lemma}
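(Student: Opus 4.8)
The plan is to use the structural equivalence to wipe out all topological contributions to the difference $H_u - H_{u'}$, reducing it to a fixed linear image of the Gaussian feature perturbations, and then to combine a concentration inequality for Gaussian vectors with the very same degree expansion already employed in Lemma~\ref{lemma:dis}.

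First I would record what structural equivalence buys us. Since $\phi$ maps $\tilde{\mc{N}}(u')$ onto $\tilde{\mc{N}}(u)$ preserving adjacency, the mapped nodes share degrees ($d_u=d_{u'}$ and $d_v=d_{\phi(v)}$), so the convolution weights coincide across the two neighborhoods, $S_{uv}=S_{u'\phi(v)}$. Writing the embedding as one effective convolution $H_{u\cdot}=\sum_{v\in\tilde{\mc{N}}(u)}S_{uv}X_{v\cdot}W$ and substituting $X_{v\cdot}=X_{\phi(v)\cdot}+\epsilon_{v\cdot}$, every term that depends on topology cancels and the difference collapses to
\[
H_u-H_{u'}=\Big(\sum_{v\in\tilde{\mc{N}}(u)}S_{uv}\,\epsilon_{v\cdot}\Big)W=:\xi W ,
\]
where $\xi$ is a centered Gaussian row vector (a fixed linear combination of the i.i.d. rows $\epsilon_{v\cdot}\sim N(0,\sigma^2 I_p)$) whose coordinates are independent with common variance $s^2=\sigma^2\sum_{v\in\tilde{\mc{N}}(u)}S_{uv}^2$.

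Next I would bound $\|H_u-H_{u'}\|^2=\|\xi W\|^2\le\|W\|_2^2\,\|\xi\|^2$ by submultiplicativity of the spectral norm (this passage is where slack enters), and then control $\|\xi\|^2$, the squared norm of a Gaussian vector with covariance $s^2 I_p$, by a standard tail bound (Gaussian Lipschitz concentration of $\|\xi\|$, or a Laurent--Massart / sub-exponential $\chi^2$ estimate). This yields, with probability at least $1-\delta$, a bound of the shape $\|\xi\|^2\le\mathbb{E}\|\xi\|^2+C\,s^2\,g(\log(1/\delta))$, the mean term producing $\mu$ and the fluctuation term producing the second summand. The remaining work is to put $s^2\propto\sum_v S_{uv}^2=(d_u+\beta)^{-2\alpha}\sum_{v}A_{uv}(d_v+\beta)^{-2\alpha}$ in closed form, for which I reuse the device of Lemma~\ref{lemma:dis}: a MacLaurin expansion of $d_v\mapsto(d_v+\beta)^{-2\alpha}$ about $d_u$, whose zeroth, first and second order pieces collapse via $\sum_v A_{uv}=d_u+\beta$, $\sum_v A_{uv}(d_v-d_u)=(d_u+\beta)\bar{\Delta}_u$ and $\sum_v A_{uv}(d_v-d_u)^2=(d_u+\beta)\overline{\Delta^2}_u$ onto exactly the three terms of $\mu$, with the remainder absorbed into $M$ as before. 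The row-normalized case is structurally identical: one only replaces $S_{uv}$ by the normalized weight $\tilde S_{uv}=(d_v+\beta)^{-\alpha}\big/\sum_{w\in\tilde{\mc{N}}(u)}(d_w+\beta)^{-\alpha}$, whence the source degree cancels and the variance proxy becomes $\sum_v\tilde S_{uv}^2\propto\sum_{v}(d_v+\beta)^{-2\alpha}$, giving the stated row-normalized bound directly, without any degree expansion.

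I expect the main obstacle to be the concentration step: pinning down the precise tail inequality so that the fluctuation carries the advertised constant and the stated $\log(1/\delta)$ dependence, while ensuring the multiplicative passage $\|\xi W\|\le\|W\|_2\|\xi\|$ does not leak a spurious dimensional factor into $\mu$. A secondary difficulty is bookkeeping the second-order Taylor remainder uniformly over $\tilde{\mc{N}}(u)$ so that it is cleanly dominated by the $M\,\overline{\Delta^2}_u$ term; the rest is a routine, if tedious, reorganization of the sums.
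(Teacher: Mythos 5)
Your proposal follows essentially the same route as the paper's proof: structural equivalence cancels the matched feature terms, leaving $H_u - H_{u'}$ as a centered Gaussian linear combination $\sum_{v} S_{uv}\,\epsilon_{v\cdot}W$ of the perturbations; the mean of the squared norm is put in the stated form via the very same Maclaurin expansion of $d_v \mapsto (d_v+\beta)^{-2\alpha}$ used in Lemma~\ref{lemma:dis} (with the zeroth/first/second order sums collapsing onto the three terms of $\mu$, and the remainder absorbed into $M$); and the $\log(1/\delta)$ tail comes from sub-exponential concentration of a squared Gaussian norm. For the row-normalized case you compute $\sum_v \tilde S_{uv}^2$ directly, whereas the paper applies H\"older's inequality ($\sum_v s_{uv}^2 \le Z \max_v s_{uv}$) together with $d_v \ge 1$ to produce the $(1+\beta)$-type factor in the statement; your direct form is actually tighter, and recovering the advertised constant requires exactly that H\"older step. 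The one place you genuinely deviate --- and where you correctly smell trouble --- is the early passage $\|\xi W\|^2 \le \|W\|_2^2\,\|\xi\|^2$ followed by a Laurent--Massart/$\chi^2$ bound on the isotropic $\|\xi\|^2$: that route injects $\mathbb{E}\|\xi\|^2 = p\,s^2$ into the mean term, i.e.\ an explicit factor of the embedding dimension $p$ that the stated $\mu$ (written with the Frobenius-type norm $\|W\|^2 = \sum_j \|W_{\cdot j}\|^2$) does not carry. The paper avoids this by never bounding before averaging: it computes each coordinate variance $\tilde\sigma_j^2 = \sigma^2\|W_{\cdot j}\|^2 (d_u+\beta)^{-2\alpha}\sum_v A_{uv}^2(d_v+\beta)^{-2\alpha}$ exactly, so that $\mu = \sum_j \tilde\sigma_j^2$ yields the Frobenius-norm mean with no dimensional leakage, and only the fluctuation is handled through the sub-exponential parameters $\bigl(2\sum_j\tilde\sigma_j^2,\,4\max_j\tilde\sigma_j^2\bigr)$, which is where the spectral factor $\|W\|_2$ legitimately appears. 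So your skeleton is the paper's, but to land on the stated bound you should compute the mean exactly coordinate-wise and reserve the submultiplicative/spectral-norm estimate for the deviation term alone.
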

This highlights fundamental differences in dealing with the embeddings: symmetric embeddings will shrink distances between nodes depending on their topology, so that the cluster density is a function of the node degree. The impact of the topology is expected to be more marked at the extremities of the spectrum of $\a$. The proof is in Appendix \ref{appendix:intrinsic}.

\xhdr{Toy example 2: Identical Features, structurally different neighbourhoods} 
Conversely, $u$ and $v$ have radically different neighborhoods from a topological perspective, but have similar features:
\[\forall j \in \tilde{\mc{N}}(u)\cup \tilde{\mc{N}}(u'), X_j = \bar{X} +\epsilon \]
In this case, as stated earlier in this section, for row-normalized embeddings, the difference will be 0: the embeddings are therefore more sensitive to the node feature values that the symmetric convolution.
Conversely, for symmetric convolutions,  we can also show (see Appendix \ref{appendix:intrinsic}) that the leading term for the difference is of the order of  $(d_u+\beta)^{1-2\a} - (d_{u'}+\beta)^{1-2\a}$ --- and is therefore a decreasing function of $\a$.

\xhdr{Experiments} We illustrate these results by running a set of final experiments. We generate a structurally equivalent networks (see smaller replica in Figure \ref{fig:str_eq}a), and evaluate the distance between untrained embeddings (using a 2-layer GCN architecture). The mean distance over 100 experiments is presented in Figure \ref{fig:str_eq}b, and a visualization of the latent space for symmetric convolutions is shown in Figure \ref{fig:str_eq}c. As expected, the density of the cluster of structurally equivalent high-degree nodes (cliques on 40 nodes), in grey varies as a function of $\a$. In Appendix\ref{sec:app-results-distance}, we also provide visualization of the interplay between node features and topologies on a subset of real datasets, using the Gromov Wasserstein distance as a way of measuring the distance of the embedding space with the original characteristics of the dataset (features and adjacency matrix),

\vspace{-0.3cm}
\section{Conclusion}
In conclusion, in this paper, we have shown that the choice of the convolution operator has fundamental consequences on the geometry of the embedding space: symmetric convolutions are generally more sensitive to the topology, and encode it in the embedding,. In that case, the choice of $\a$ amounts to selecting ``who to trust'': high-values of $\a$ push high-degree nodes towards the origin, thereby limiting their leverage. Conversely, row-normalized are more limited in the amount of topological information that they carry, and convolutions are more robust to the choice of $\a$ --- this is probably a better choice when the data is assumed to be sampled from a manifold (e.g. point cloud data).
Our analysis --- which we hope to be insightful --- has room for further improvement. Our reasoning relies on upper bounds which, while providing intuition, are not extremely tight, and could be complemented with lower bounds to fully characterize the behavior of the geometry. All experiments resort to using GCN types of architectures. However, we believe that the intuition and guidelines that we derived from this analysis will nonetheless hold for other types of architectures.

\bibliographystyle{plain}

\setcounter{tocdepth}{10}.

\appendix
\section{Comparison of GNN operators}\label{appendix:gnn}
\begin{table}[H]
\centering
 \begin{adjustbox}{max width=0.8\textwidth}
\begin{tabular}{|c|c|}
\hline
\textbf{Method} &
  \textbf{Operator} \\ \hline \hline
GCNConv \cite{kipf2016semi} &
  \begin{tabular}[c]{@{}c@{}}$ x_i' = \hat{D}^{-1/2} \hat{A} \hat{D}^{-1/2} XW$\\ $ x_i' =  \sum_{j \in \mathcal{N}(i)} \frac{e_{ij}}{\sqrt{\hat{d}_i\hat{d}_j}}x_j $\end{tabular} \\ \hline
ChebConv \cite{defferrard2016convolutional} &
  \begin{tabular}[c]{@{}c@{}}$X =  X W_1 + \hat{L}X W_2 + (2\hat{L}^2X - X)W_3$\\ with $\hat{L} = \frac{2}{\lambda_{\max}}L-I$\end{tabular} \\ \hline
SAGEConv \cite{hamilton2017inductive}&
  \begin{tabular}[c]{@{}c@{}}$x = W_1x_i + W_2 \bar{X}_{j \in \mathcal{N}(i)} $\\ with \\ $ \bar{X}_{j \in \mathcal{N}(i)} =\frac{\sum_{j \in \mathcal{N}(i)}x_j}{d_i}$\end{tabular} \\ \hline
GraphConv\cite{morris2019weisfeiler} &
  $x_i' = W_1 x_i + W_2\sum_{j \in \mathcal{N}(i)} e_{ij} x_j$ \\ \hline
GatedGraphConv \cite{li2015gated} &
  \begin{tabular}[c]{@{}c@{}}$h_i^{(0)} = x_i ||0$\\ $m_i^{(l+1)} = \sum_{j \in \mathcal{N}(i)} e_{j,i} Wh_j^{(l)}$\\ $h_i^{(l+1)} = GRU(m_i^{(l+1)}, h_i^{(l)} $\end{tabular} \\ \hline
ResGatedGraphConv \cite{bresson2017residual} &
  \begin{tabular}[c]{@{}c@{}}$x_i' = W_1x_i + \sum_{j \in \mathcal{N}(i)} \eta_{ij} \circ W_2x_j$\\ with\\ $\eta_{ij} =\sigma(W_3x_i + W_4x_j) $\end{tabular} \\ \hline
GAT \cite{velivckovic2017graph}/GATv2Conv \cite{brody2021attentive}&
  \begin{tabular}[c]{@{}c@{}}$x_i' = \alpha_{ii}Wx_i + \sum_{j\in \mathcal{N}(i)}\alpha_{ij}Wx_j $\\ with \\ $\alpha_{ij} =\frac{\exp\{ \text{LeakyReLU}(a^T[\Theta x_i || \Theta x_j]\}}{\sum_{k\in \mathcal{N}(i) \cup \{i\} } \exp\{ \text{LeakyReLU}(a^T[\Theta x_i || \Theta x_j]\}}$ \end{tabular} \\ \hline
AGNN \cite{thekumparampil2018attention}&
  \begin{tabular}[c]{@{}c@{}}$X' =PX$\\ $P_{ij} = \frac{ \exp\{ \beta \dot \cos(x_i,x_j) \}}{\sum_{k \in \mathcal{N}(i) \cup \{i\} }\exp\{ \beta \dot \cos(x_i,x_k) \} }$\end{tabular} \\ \hline
Transformer Conv \cite{shi2020masked}&
  \begin{tabular}[c]{@{}c@{}}$x_i' = W_1 x_i +\sum_{j \in \mathcal{N}(i)} \alpha_{ij} W_2 x_j$\\ $\alpha_{ij} =\text{Softmax} \frac{(W_3 x_i)^T (W_4 x_j)}{\sqrt{d}} $\end{tabular} \\ \hline
TAGConv \cite{du2017topology}&
  $ X' = \sum_{k=0}^K (D^{-1/2} AD^{-1/2})^k XW_k$ \\ \hline
GINConv \cite{xu2018powerful}&
  $X'  = h_{\theta}\Big( (\mathbf{A} + (1+\epsilon) \mathbf{I} )X \Big))$ \\ \hline
GINEConv \cite{hu2019strategies}&
  $x'_i  = h_{\theta}\Big( ( (1+\epsilon) x_i +\sum_{j \in \mathcal{N}(i)} \text{ReLU}(x_j +e_{ij}) \Big))$ \\ \hline
ARMAConv \cite{bianchi2021graph} &
  $X' =\frac{1}{K} \sum_{k=1}^K X_k^{(T)}$ \\ \hline
SGCConv \cite{wu2019simplifying} &
  $X' = (\hat{D}^{-1/2} \hat{A}\hat{D}^{-1/2} )^KXW$ \\ \hline
\end{tabular}
\end{adjustbox}
\caption{Comparison of the different convolution operators}\label{tab:operators_all}
\end{table}

\section{Proofs of Section \ref{sec:geometry}}\label{appendix:extrinsic}
\begin{proof}[Lemma \ref{lemma:dis}]
As per section \ref{sec:geometry}, we study the last (linear) layer of the Graph Neural Network (see Equation~\ref{eq:last_layer}). This layer has indeed the advantage of being linear --- and thereby easy to analyse. In this setting, we simply want to show that the use of a spatial convolution imposes a special geometry on the embeddings. By induction, we expect this result to hold for the other layers as well (assuming that the non-linearity that is chosen is ReLU --- other choices of non-linearities (e.g. $\tanh$), while linearisable close to 0, would require a more careful proof).  

In this section, we analyse the embedding that is fed into the last linear layer, denoted as:
$$ H^{(K)} = S\s( H^{(K-1)}) = \sum_{v \in \mc{N}(u) \cup\{u\}} \frac{A_{uv}}{(d_u +\beta)^{\alpha}(d_v +\beta)^{\alpha}} Z_{v\cdot}  $$
where $ \mc{N}(u)$ denotes the neighbourhood of node $u$, $A_{uv}$ is the (potentially weighted) adjacency matrix, with diagonal equal to $\beta$, and $Z_{v\cdot}  = \s( H^{(K-1)}_{v\cdot})$.

Writing $\Delta_v = {d_v-d_u}$, note that $H^{(K)}$ can be rewritten as:
\begin{align}\label{eq:taylor}
     H^{(K)} & = \frac{1}{(d_u +\beta)^{2\alpha}}\sum_{v \in \mc{N}(u) \cup\{u\}} \frac{A_{uv}}{(1 + \frac{\Delta_v}{d_u +\beta})^{\alpha}} Z_{v\cdot} 
\end{align}

Note that $\frac{\Delta_v}{d_u+\beta} \geq -1$ as long as $d_v-d_u\geq -d_u -\beta  \implies  d_v \geq -\beta$, which holds necessarily, since $d_v\geq 1$.  Since the function $x \to  ( x+ 1)^{-\alpha}$ is infinitely differentiable for $x \in (-1, \infty)$, using the Maclaurin expansion of $( x+ 1)^{-\alpha}$ around $0$, we know that there exists $ \xi\in [\min(0, x),\max(0, x)]$ such that:
 \begin{align} \label{eq:holder} \frac{1}{(x+1)^{\a}}  =1  - \a x +  \frac{\alpha(\alpha+1)}{2} \frac{ x^2}{(\xi+1)^{\alpha+2}}   \end{align}
It is easy to check that if $\frac{\Delta_v}{d_u+\beta}\geq 0$, then $\frac{1}{(1+\xi)^{\alpha+2}}\leq 1$.  Conversely, if $d_v\leq d_u$, then $\frac{1}{(\xi+1)^{\alpha+2}}\frac{1}{(1+\frac{\Delta_v}{d_u+\beta})^{\alpha+2}} \leq \frac{1}{(1+\frac{1-d_u}{d_u+\beta})^{\alpha+2}}\leq \frac{1}{(\frac{\beta+1}{d_u+\beta})^{2+\alpha}}\leq (d_{\max} + \beta)^{2 + \alpha}= M$.

Equation~\ref{eq:taylor} thus becomes: 
 \begin{align}
&  ||S Z||_2 \leq \frac{1}{(d_u + \b)^{2\a} } \sum_{v \in \mathcal{N}(u)\cup\{u\}} \frac{A_{uv}}{( 1 +  \frac{\Delta_v}{d_u + \beta} )^{\a}}  ||Z_v||_{2} \notag\\
&\leq \frac{ || Z||_{2,\infty}}{(d_u + \b)^{2\a} } \sum_{v \in \mathcal{N}(u)\cup\{u\}} A_{uv} ( 1 -\alpha  \frac{\Delta_v}{d_u + \beta} + \frac{\alpha(\alpha+1)M}{2}  \frac{\Delta^2_v}{(d_u + \beta)^2}) \notag\\
&={ ||Z||_{2,\infty}}  \Big ( ( d_u + \b)^{1-2\alpha} -  \alpha  \frac{\bar{\Delta}_v}{(d_u + \beta)^{2\alpha}} + \frac{\alpha(\alpha+1)M}{2}  \frac{\overline{\Delta^2}_v}{(d_u + \beta)^{1 + 2\alpha}} \Big) \label{eq:ineq}
\end{align}
where $\bar{\Delta}_u$ (respectively $\bar{\Delta^2}_u$) are the weighted averages of the degree differences: $\bar{\Delta}_u = \frac{ \sum_{v \in \mathcal{N}(u)\cup\{u\}} A_{uv} \Delta_v} {d_u + \beta}$ (respectively, squared degree differences: $\bar{\Delta^2}_u = \frac{ \sum_{v \in \mathcal{N}(u)\cup\{u\}} A_{uv} \Delta_v^2} {d_u + \beta}$). In the previous equation, we have also introduced the notation $||Z||_{2, \infty} = \max_{v} ||Z_v||_2$.

\begin{remark}
Note that, when $\s=\text{ReLU}$, then, by positivity of the entries in $Z$, we have the following (entrywise) inequality:
 \begin{align}
& 0\leq S Z \leq \frac{1}{(d_u + \b)^{2\a} } \sum_{v \in \mathcal{N}(u)\cup\{u\}} \frac{A_{uv}}{( 1 +  \frac{\Delta_v}{d_u + \beta} )^{\a}}  Z_v \notag\\
&\leq{ ||Z||_{\infty}}  \Big ( ( d_u + \b)^{1-2\alpha} -  \alpha  \frac{\bar{\Delta}_v}{(d_u + \beta)^{2\alpha}} + \frac{\alpha(\alpha+1)M}{2}  \frac{\overline{\Delta^2}_v}{(d_u + \beta)^{1 + 2\alpha}} \Big) \label{eq:ineq2}
\end{align}
\end{remark}

\end{proof}

\section{Proofs of Section \ref{sec:geometry-intrinsic}}\label{appendix:intrinsic}
\subsection{Section \ref{sec:geometry-intrinsic}: Proof of the observations}

We begin by revisiting in greater details the observations  made in section \ref{sec:geometry-intrinsic}.
To see how the two families of spatial operators differ in the importance they attribute to topology and node feature information, consider a simple two-layer GCN such as  suggested by Kipf et al \cite{kipf2016semi}. In this setting, node embeddings can be written as:
 $H = S  \s(S XW + b),$ so that the output of the network is $Y = S  \s(S XW + b)W^{(2)}+b^{(2)}=HW^{(2)} + b^{(2)}$. We also choose the non-linearity $\s$ to be the ReLU function. In this case, for the directions in which the term is positive,  the embedding $H$ (ie, the transformed features that are being fed into the last linear layer) can be re-written as:
\begin{align}\label{eq:emb}
H_{u\cdot}= \sum_{k=1}^d \sum_{\substack{
        v \in \tilde{\mc{N}}(u)  \\
        (S XW + b)_{vk} \geq 0
    }} \big( S_{uv}(S XW)_{vk}W^{(2)}_{k\cdot} + S_{uv}b_{vk}W^{(2)}_{k\cdot} \big)\end{align} 
    The embedding is thus the sum of two components: a function of a (subset of) neighbouring feature vector and a term that has the potential to encode local topology. To see why this is the case, consider a scenario where nodes in $\mc{N}(u)$ are all such that $ (S XW + b)_{vk} \geq 0$ for all $k$ or $ (S XW + b)_{vk} < 0$ for all $k$. Denote $\tilde{A}(u) = \{ v \in \tilde{N}(u):\quad (S XW + b)_{vk} \geq 0 \quad \text{for all } k \}$. In this case, Equation \ref{eq:emb} becomes:
$H_{u\cdot}= \sum_{
        v \in {\tilde{A}(u)}} \big( S_{uv}(S X\tilde{W})_{v\cdot} + S_{uv}\tilde{b}_{v\cdot} \big)$, with $\tilde{b} = bW^{(2)}$ and $\tilde{W}=WW^{(2)}$.
Therefore, for symmetric convolutions, the term $ (\sum_{
        v \in {\tilde{A}(u)}} S_{uv})b  $ encodes information about the neighborhood (it is proportional to the number of terms in the sum $|{\tilde{A}(u)}|$.)
        Conversely, for row-symmetric convolutions, this term is identically equal to $b$, resulting in an embedding that is less sensitive to topology.

\subsection{Proof of lemma \ref{lemma:inh}: Symmetric Convolutions}

In this subsection, we prove the results stated in lemma \ref{lemma:inh} for symmetric convolutions. We remind the reader of the setting of lemma \ref{lemma:inh}: we consider two structurally equivalent neighbourhoods (meaning that there exists a mapping $\phi$ that transforms each node in the neighborhood of $v$ into its corresponding one in the neighborhood of $u$ --- see Figure~\ref{fig:str_eq}), but the feature vectors are different. Mathematically, we model this situation as:
$$ \forall j \in N(v), \quad  X_j = X_{u\phi(j)} +\epsilon $$
where $\epsilon$ is a vector with independent centered Gaussian entries with parameter $\sigma$.  The purpose of this subsection is to analyze the effect of the convolution on the relative distance between embeddings. 

Lemma \ref{lemma:inh} is re-written here, to make this appendix self-contained:

\xhdr{Lemma 4.1}
{\it 
For symmetric convolutions, with probability at least $1-\delta$, with $M$ as in \ref{lemma:dis}, we have: 
\begin{align*}
||H_u -H_{u'}||^2  \leq \mu +  2 \sqrt{2}\s||W||_2(d_u +\beta)^{1-2\a} \sqrt{1 +  2\a |\overline{\Delta}_u |({d_u + \beta})^{-1} +  \a(2\a+1) M} \log(1/\delta) \end{align*}
  where $\mu = \sigma^2||W||^2 \Big( (d_u +\beta)^{2-4\a} +  2\a |\overline{\Delta}_u |({d_u + \beta})^{-4\alpha} +  \a(2\a+1) M \overline{\Delta^2}_u({d_u + \beta})^{-1-4\alpha}\Big).$
Conversely, for row-symmetric embeddings:
\vspace{-0.2cm}
\begin{align*}
||H_u -H_{u'}||^2 
  & \leq \mu +  2 \sqrt{2}\s||W||_2\sqrt{\sum_{v \in \tilde{\m{N}}(u)} \frac{1}{(d_v +\beta)^{2\a}}} \log(1/\delta),  \quad  \mu = \frac{\sigma^2||W||^2}{\sum_{v \in \tilde{\m{N}}(u)} (d_v +\beta)^{-2\a}} \frac{1}{1+\beta}  \end{align*}

}

As previously stated, the purpose of this subsection is to analyze the effect of the convolution on the relative distance between embeddings. Consequently, we consider a simplified one-layer setting, with no non-linearities. We argue that this is indeed sufficient to characterize the effect of the convolution on the organization of the data, and we expect results for deeper networks to follow by induction, and to hold by 1-Lipschitzness of the ReLU activation for ReLU non-linear GNNs. 

\begin{proof}{\bf \hspace{0.05cm} of Lemma~\ref{lemma:inh}(symmetric convolutions).}
Therefore, in the simplified setting, the distance between the outputs of a GCN layer for nodes $u$ and $v$ can be written as:
\begin{equation}
    \begin{split}
H^{(k)}_u -H^{(k)}_{u'}= (SX_{u}-SX_{u'})W &=\sum_{v\in \mc{N}(u)\cup \{u\}} \frac{A_{uv}}{(d_u+\beta)^{\alpha} (d_v +\beta)^{\alpha}}  (X_{v} -X_{\phi(v)})W \\
&= \sum_{v\in \mc{N}(u)\cup \{u\}} \frac{A_{uv}}{(d_u+\beta)^{\alpha} (d_v +\beta)^{\alpha}} \epsilon_vW\\
        \end{split}
\end{equation}
Since $\epsilon_v \sim \mc{N}(0, \sigma^2),$ each entry of the vector $ H^{(k)}_u -H^{(k)}_{u'}$ is Gaussian:
\begin{align*}
  H^{(k)}_{uj} -H^{(k)}_{u'j}=  \sum_{v\in \mc{N}(u)\cup \{u\}} \frac{A_{uv}}{(d_u+\beta)^{\alpha} (d_v +\beta)^{\alpha}} \sum_{k=1}^d \e_{vk}W_{kj} \sim \mc{N}(0, \frac{\sigma^2||W_{\cdot j}||^2}{(d_u+\beta)^{2\alpha}}\sum_{v\in \mc{N}(u)\cup \{u\}} \frac{A_{uv}^2}{(d_v +\beta)^{2\alpha}})
\end{align*} 

The mean of $||H^{(k)}_u -H^{(k)}_{u'}||^2$ is simply given by:
\begin{align*} \mu = \E[||H^{(k)}_u -H^{(k)}_{u'}||^2] &= \frac{\sigma^2||W||^2}{(d_u+\beta)^{2\alpha}}\sum_{v\in \mc{N}(u)\cup \{u\}} \frac{A_{uv}^2}{(d_v +\beta)^{2\alpha}}\\
&\leq  \frac{\sigma^2||W||^2}{(d_u+\beta)^{4\alpha}}\sum_{v\in \mc{N}(u)\cup \{u\}}A_{uv}^2  (1 -  2\a \frac{\Delta_v}{d_u + \beta} +  \a(2\a+1) M \frac{\Delta_v^2}{(d_u + \beta)^2}) \\
\end{align*}
for some constant $M$, using the same Taylor expansion reasoning as for Lemma~\ref{lemma:dis}. Therefore, denoting as $\tilde{d}_u =\sum_{v \in \mc{N}(u)\cup\{u\}} A_{uv}^2$, we have:
\begin{align*} \mu  &\leq  \frac{\sigma^2||W||^2}{(d_u+\beta)^{4\alpha}} \Big( (\tilde{d}_u +\beta^2) -  2\a \overline{\Delta}_u \frac{\tilde{d}_u+\beta^2}{d_u + \beta} +  \a(2\a+1) M \overline{\Delta^2}_u  \frac{\tilde{d}_u + \beta^2}{(d_u + \beta)^2}) \\
 &\overset{(i)}{\leq}  \frac{\sigma^2||W||^2}{(d_u+\beta)^{4\alpha}} \Big( (d_u +\beta)^2 +  2\a |\overline{\Delta}_u| \frac{({d}_u+\beta)^2}{d_u + \beta} +  \a(2\a+1) M \overline{\Delta^2}_u) \\
& =  \sigma^2||W||^2 \Big( (d_u +\beta)^{2-4\a} +  2\a |\overline{\Delta}_u |({d_u + \beta})^{1-4\alpha} +  \a(2\a+1) M \overline{\Delta^2}_u({d_u + \beta})^{-4\alpha}\Big) \\
\end{align*}
where line (i) follows from the fact that, assuming the edge weights are less than 1, $A_{uv}^2 \leq A_{uv}$, implying that $\tilde{d}_u \leq d_u$. Note that in the case where $\b\leq 1$ (so that $\b^2\leq 1)$, this bound can be made even tighter by considering:
$$\mu \leq  \sigma^2||W||^2 \Big( (d_u +\beta)^{2-4\a} +  2\a |\overline{\Delta}_u |({d_u + \beta})^{-4\alpha} +  \a(2\a+1) M \overline{\Delta^2}_u({d_u + \beta})^{-1-4\alpha}\Big).$$

Let us now turn to the analysis of the concentration of this norm. By Gaussianity of each of its coordinate, the squared norm
$|| H^{(k)}_u -H^{(k)}_{u'}||^2 = \sum_{j=1}^p \big(\sum_{v\in \mc{N}(u)\cup \{u\}} \frac{A_{uv}}{(d_u+\beta)^{\alpha} (d_v +\beta)^{\alpha}} \e_{v\cdot} W_{\cdot j}\big)^2 $ is  sub-exponential.

To see this, note that since each of the $p$ coordinate of the vector $H^{(k)}_u -H^{(k)}_{u'}$ is Gaussian with variance $\tilde{\s}_j^2=\frac{\sigma^2||W_{\cdot j}||^2}{(d_u+\beta)^{2\alpha}}\sum_{v\in \mc{N}(u)\cup \{u\}} \frac{A_{uv}^2}{(d_v +\beta)^{2\alpha}})$, its square is sub-Exponential with parameter $(2\tilde{\s}_j^2 ,4\tilde{\s}_j^2)$ (\cite{wainwright2019high}), so the squared norm (ie the sum of the squared entries) is sub-Exponential with parameter:
$$(2\sum_{j=1}^p \tilde{\s}_j^2 , 4\max_j \tilde{\s}_j^2) = \Big( 2\frac{\sigma^2||W||^2}{(d_u+\beta)^{2\alpha}}\sum_{v\in \mc{N}(u)\cup \{u\}} \frac{A_{uv}^2}{(d_v +\beta)^{2\alpha}}, 4 \frac{\sigma^2||W||_{2, \infty}^2}{(d_u+\beta)^{2\alpha}}\sum_{v\in \mc{N}(u)\cup \{u\}} \frac{A_{uv}^2}{(d_v +\beta)^{2\alpha}} \Big).$$ By property of the sub-exponential tail, we know that:
\begin{align}
 \P[||H^{(k)}_u -H^{(k)}_{u'}||^2  - \mu \geq t] \leq \min(e^{-t^2/(4\sum_{j=1}^p \tilde{\s}_j^2}, e^{-t/(2\sqrt{2}\sqrt{\sum_{j=1}^p \tilde{\s}_j^2}} )
 \end{align}
 Therefore, with probability at least $1-\delta$, for any $\delta \in (0,1)$, we must have:
\begin{align}
&||H^{(k)}_u -H^{(k)}_{u'}||^2  \leq \mu +  2\sqrt{2}
 \sqrt{\frac{\sigma^2||W||^2}{(d_u+\beta)^{2\alpha}}\sum_{v\in \mc{N}(u)\cup \{u\}} \frac{A_{uv}^2}{(d_v +\beta)^{2\alpha}}}\log(1/\delta) \\
  & \leq \mu +  2 \sqrt{2}\s||W||_2(d_u +\beta)^{1-2\a} \sqrt{1 +  2\a |\overline{\Delta}_u |({d_u + \beta})^{-1} +  \a(2\a+1) M \overline{\Delta^2}_u ({d_u + \beta})^{-2} } \log(1/\delta)
  \end{align}

The concentration is thus a function of the node degree: the leading term is in $(d_u + \beta)^{1-2\alpha}$, and we observe again the existence of a critical threshold at $\alpha=0.5$.
\end{proof}

\subsection{Proof of lemma \ref{lemma:inh}: the case of row-normalized Convolutions}

We now turn to the proof of Lemma~\ref{lemma:inh} for row-normalized convolutions.

\begin{proof}{\bf  \hspace{0.05cm}  of Lemma~\ref{lemma:inh}(row-normalized convolutions).}
In the case of row-normalized convolutions, we have instead:
\begin{equation}
    \begin{split}
(SX_{u}-SX_{u'})W &= \sum_{v\in \mc{N}(u)\cup \{u\}}s_{uv} \epsilon_vW\\
        \end{split}
\end{equation}
where, as highlighted in section \ref{sec:geometry}, $s_{uv}$ is proportional to  $\frac{1}{(d_v+ \beta)^{\a}}$, but does not depend on $d_u$. In this case, following a similar reasoning to the previous subsection:

\begin{align*} \mu = \E[||H^{(k)}_u -H^{(k)}_{u'}||^2] &= \frac{\sigma^2||W||^2}{Z^2}\sum_{v\in \mc{N}(u)\cup \{u\}} s_{uv}^2 \quad \text{with } Z = \sum_{v\in \mc{N}(u)\cup \{u\}} s_{uv} \\
&\leq  \frac{\sigma^2||W||^2}{Z^2} \max_{v\in \mc{N}(u)\cup \{u\}}\{s_{uv}\} \quad \text{ by Holder's inequality} \\
&\leq  \frac{\sigma^2||W||^2}{Z^2} \frac{1}{(d_{\min} +\beta)^{\alpha}}\\
&\leq  \frac{\sigma^2||W||^2}{Z^2} \frac{1}{(1+\beta)^{\alpha}} 
\end{align*}
\end{proof}

\subsection{Toy example 2}

Conversely, $u$ and $u'$ have radically different neighborhoods from a topological perspective, but have similar features:
$$ \forall j \in \mc{N}(u) \cup \mc{N}(u') ,\quad X_j = \bar{X}_u $$

\begin{description}[noitemsep, leftmargin=0.5cm]
\item[In the symmetric case:]
\begin{align*}
&||(SX)_{u\cdot} - (SX)_{u'\cdot}||^2  \\
&= || \sum_{v\in \tilde{\mc{N}}(u)} \frac{A_{uv}}{(d_u+\beta)^{\alpha}  (d_v +\beta)^{\alpha}}  \bar{X} - \sum_{v'\in \tilde{\mc{N}}(u')} \frac{A_{u'v'}}{(d_{v'}+\beta)^{\alpha} (d_{u'} +\b)^{\alpha}} \bar{X} ||^2\\
&= \Big( \sum_{v\in \tilde{\mc{N}}(u)} \frac{A_{uv}}{(d_u+\beta)^{\alpha}  (d_v +\beta)^{\alpha}} - \sum_{v'\in \tilde{\mc{N}}(u')} \frac{A_{u'v'}}{(d_{v'}+\beta)^{\alpha} (d_{u'} +\b)^{\alpha}} \Big)^2||\bar{X}||^2\\ 
&= \Big( \sum_{v\in \tilde{\mc{N}}(u)} \frac{A_{uv}}{(d_u+\beta)^{2\alpha}  (1 + \frac{\Delta_v}{d_u +\beta})^{\alpha}} - \sum_{v'\in \tilde{\mc{N}}(u')} \frac{A_{u'v'}}{(d_{u'}+\beta)^{2\alpha} (1 +\frac{\Delta_v}{d_{u'} +\beta})^{\alpha}} \Big)^2||\bar{X}||^2\\ 
&=  \Big( (d_u + \beta)^{1-2\alpha} -\alpha \overline{\Delta}_u + \frac{\alpha(\alpha+1)}{2(d_u + \beta)^{2 + 2\alpha}}  \sum_{v\in \tilde{\mc{N}}(u)}  \frac{A_{uv}(d_v-d_u)^2}{((1-t_v) + t_v\frac{\Delta_v}{d_u +\beta} + \beta)^{2 + 2\alpha}} \\
&- (d_{u'} + \beta)^{1-2\alpha} +\alpha \overline{\Delta}_{u'} - \frac{\alpha(\alpha+1)}{2(d_{u'} + \beta)^{2 + 2\alpha}}  \sum_{v'\in \tilde{\mc{N}}(u')}  \frac{A_{u'v'}(d_v'-d_u')^2}{((1-t_{v'}) + t_{v'}\frac{\Delta_{v'}}{d_{u'} + \beta} + \beta)^{2 + 2\alpha}} \Big)^2||\bar{X}||^2
\end{align*}
where $t_v, t_{v'} \in [0,1]$.
In this case, note that:
\begin{itemize}
    \item When $\a=0$, this difference writes as:
    $ ||(SX)_{u\cdot} - (SX)_{u'\cdot}||^2   = d_u  -d_{u'} $, and is thus extremely sensitive to the degree of the nodes,
    \item When $\a=1$, the difference can be written as:
   \begin{align*}
||(SX)_{u\cdot} - (SX)_{u'\cdot}||^2    &=  \Big( \frac{1}{d_u + \beta} - \overline{\Delta}_u + \frac{1}{(d_u + \beta)^{4}}  \sum_{v\in \tilde{\mc{N}}(u)}  \frac{A_{uv}(d_v-d_u)^2}{((1-t_v) + t_v\frac{\Delta_{v}}{d_{u} + \beta} + \beta)^{4}} \\
&- \frac{1}{d_{u'} + \beta} + \overline{\Delta}_{u'} - \frac{1}{(d_{u'} + \beta)^{4}}  \sum_{v'\in \tilde{\mc{N}}(u')}  \frac{A_{u'v'}(d_v'-d_u')^2}{((1-t_{v'})+ t_{v'}\frac{\Delta_{v'}}{d_{u'} + \beta} + \beta)^{4}} \Big)^2||\bar{X}||^2
   \end{align*} 
   In this case, the leading terms are functions of the inverse of the node degrees $\frac{1}{d_{u} + \beta}- \frac{1}{d_{u'} + \beta} $ and the difference in local homogeneity of topology $\overline{\Delta}_u - \overline{\Delta}_{u'}$. Consequently, the distance is still sensitive to topological properties of the neighborhood.
   \item When $\alpha=0.5$: in this case, the distance writes as:
   \begin{align*}
||(SX)_{u\cdot} - (SX)_{u'\cdot}||^2 &=  \Big(  \frac{1}{2}\overline{\Delta}_{u'}-\frac{1}{2}\overline{\Delta}_u \\
&+ \frac{3}{8(d_u + \beta)^{3}}  \sum_{v\in \tilde{\mc{N}}(u)}  \frac{A_{uv}(d_v-d_u)^2}{((1-t_v) + t_v\frac{\Delta_v}{d_u +\beta} + \beta)^{3}}\\
&- \frac{3}{8(d_{u'} + \beta)^{3}}  \sum_{v'\in \tilde{\mc{N}}(u')}  \frac{A_{u'v'}(d_{v'}-d_{u'})^2}{((1-t_{v'}) + t_{v'}\frac{\Delta_{v'}}{d_{u'} +\beta} + \beta)^{3}} \Big)^2||\bar{X}||^2
\end{align*}
Consequently, this distance is less directly related to the degree of the node, and relies more on the topological traits of the neighborhood.
\end{itemize}

\item[In the regularised case:]
\begin{equation}
    \begin{split}
        &||(SXW_1 + b_1)_u-(SXW_1 + b_1)_{u'} ||^2 =  0
        \end{split}
\end{equation}
In this case, the distance is entirely driven by the features.
\end{description}


\section{Further results and experiments}\label{appendix:experiments}

We analyzed the impact of the choice of operator across $\alpha$ and $\beta$ using standard benchmark datasets. In particular, the node classification task has been performed. The performance turns out to be dependent on the choice of an operator as well as the inherent characteristics of the datasets. We use visualizations to further investigate the properties of each embedding space in relation to the choice of operator. 

\subsection{Dataset statistics}

\begin{table}[H]
\centering
\resizebox{\columnwidth}{!}{%
\begin{tabular}{@{}l|ccccccll@{}}
\toprule
Name &
  Node &
  Edge &
  \begin{tabular}[c]{@{}c@{}}Features\end{tabular} &
  Class &
  \begin{tabular}[c]{@{}c@{}}Avg.Degree\end{tabular} &
  \begin{tabular}[c]{@{}c@{}}Mean\\ Centrality\end{tabular} &
  \multicolumn{1}{c}{$h_{\text{edges}}$} &
  \multicolumn{1}{c}{$h_{\text{nodes}}$} \\ \midrule
Cora          & 2,708  & 10,556  & 1,433 & 7  & 3.90  & $1.65E-03$ & {\color[HTML]{000000} 0.81} & {\color[HTML]{000000} 0.83} \\
Pubmed        & 19,717 & 88,648  & 500   & 3  & 4.50  & $2.71E-04$ & {\color[HTML]{000000} 0.80} & {\color[HTML]{000000} 0.79} \\
Citeseer      & 3,327  & 9,104   & 3,703 & 6  & 2.74  & $1.02E-03$ & {\color[HTML]{000000} 0.74} & {\color[HTML]{000000} 0.71} \\
Coauthor CS   & 18,333 & 163,788 & 6,805 & 15 & 8.93  & $2.42E-04$ & {\color[HTML]{000000} 0.81} & {\color[HTML]{000000} 0.83} \\
Amazon Photos & 7,650  & 238,162 & 745   & 8  & 31.13 & $3.82E-04$ & {\color[HTML]{000000} 0.83} & {\color[HTML]{000000} 0.84} \\
Actor         & 7,600  & 30,019  & 932   & 5  & 3.95  & $3.18E-04$ & {\color[HTML]{000000} 0.22} & {\color[HTML]{000000} 0.21} \\
Cornell       & 183    & 280     & 1,703 & 5  & 1.53  & $1.07E-04$ & {\color[HTML]{000000} 0.31} & {\color[HTML]{000000} 0.21} \\
Wisconsin     & 251    & 515     & 1,703 & 5  & 2.05  & $2.42E-04$ & {\color[HTML]{000000} 0.20} & {\color[HTML]{000000} 0.13} \\ \cmidrule(r){1-9}
\end{tabular}
}
\caption{Statistics for datasets used for experiments. Node and edge homophily indices are calculated by the formula suggested in \cite{Pei2020GeomGCN}, \cite{Zhu2020BeyonHomophily} respectively.}
\label{tab:data-stats}
\end{table}

\xhdr{Datasets} We used eight datasets used for experiments, namely, Cora, Pubmed, Citeseer, and Amazon Photos, Coauthor CS, as well as three heterophilic graph datasets, namely, Actor, Cornell, Wisconsin. We use the processed version provided by PyTorch Geometric\cite{fey2019fast}. Detailed statistics for the datasets used in the experiments are shown in Table \ref{tab:data-stats}.

\textit{Citation networks}. Cora, Citeseer, and Pubmed are standard citation network benchmark datasets.\cite{Yang2019Revisiting} In these networks, nodes represent scientific publications, and edges denote citation links between publications. Node features are the bag-of-words representation of papers, and
node label is the academic topic of a paper.

\textit{Coauthor} In Coauthor CS\cite{Shchur2018Pitfalls} network, each node represents the author of the scientific publication, and edge shows whether any of the authors coauthored the paper. Node features are bag-of-word representations of these documents, and node labels denote the field of study. 

\textit{Amazon} In Amazon Photo\cite{Shchur2018Pitfalls} network,  nodes represent goods and edges show whether two goods are frequently bought together. Node features are bag-of-word representation of product reviews.

\textit{WebKB}. WebKB is a webpage dataset collected from computer science departments of various universities by Carnegie Mellon University. We use Cornell, and Wisconsin among them. Nodes represent web pages, and edges are hyperlinks between them. Node features are the bag-of-words representation of web pages. The web pages are manually classified into the five categories: student, project, course, staff, and faculty.

\textit{Cooccurrence network} Actor dataset is the actor-only induced subgraph of the film-director-actor-writer network\cite{Pei2020GeomGCN}. Each node corresponds to an actor, and the edge between two nodes denotes co-occurrence on the same Wikipedia page. Node features correspond to corresponding Wikipedia keywords. 

\subsection{Experiment setup}

\xhdr{Models} We use a two-layer GCN\cite{kipf2016semi} model with varying families of spatial convolution operator across the choice of $\alpha$ while keeping $\beta \in \{0,1\}$. For each experiment, we randomly split the data into training and test sets (using the default number of train and test points in Pytorch geometric). The number of training nodes used are specified in Table \ref{table:char}. Further training details including data split, number of experiments and learning rate are also summarized in Table \ref{table:char}. 

\begin{table}[H]
\centering
\resizebox{\columnwidth}{!}{%
\begin{tabular}{@{}c|rrrrrrr@{}}
\toprule
\textbf{Name} &
  \multicolumn{1}{c}{\textbf{\begin{tabular}[c]{@{}c@{}}num of   \\ training nodes\end{tabular}}} &
  \multicolumn{1}{c}{\textbf{lr}} &
  \multicolumn{1}{c}{\textbf{epoch}} &
  \multicolumn{1}{c}{\textbf{\begin{tabular}[c]{@{}c@{}}num of \\ experiments\end{tabular}}} &
  \multicolumn{1}{c}{\textbf{\begin{tabular}[c]{@{}c@{}}training time\\ per epoch(sec)\end{tabular}}} &
  \multicolumn{1}{c}{\textbf{\begin{tabular}[c]{@{}c@{}}dim of \\ hidden layer\end{tabular}}} &
  \multicolumn{1}{c}{\textbf{\begin{tabular}[c]{@{}c@{}}num of \\ GCN layers\end{tabular}}} \\ \midrule
Cora          & 140           & 0.02  & 200 & 50 & 9.58   & {\color[HTML]{000000} 32} & {\color[HTML]{000000} 2} \\
Pubmed        & 60            & 0.001 & 500 & 30 & 100.49 & {\color[HTML]{000000} 32} & {\color[HTML]{000000} 2} \\
Citeseer      & \textit{1694} & 0.05  & 500 & 30 & 31.73  & {\color[HTML]{000000} 32} & {\color[HTML]{000000} 2} \\
Coauthor CS   & 9194          & 0.05  & 200 & 30 & 150.68 & {\color[HTML]{000000} 32} & {\color[HTML]{000000} 2} \\
Amazon Photos & 3844          & 0.05  & 200 & 30 & 88.04  & {\color[HTML]{000000} 32} & {\color[HTML]{000000} 2} \\
Actor         & 3804          & 0.02  & 200 & 30 & 13.02  & {\color[HTML]{000000} 32} & {\color[HTML]{000000} 2} \\
Cornell       & 87            & 0.01  & 500 & 30 & 2.95   & {\color[HTML]{000000} 32} & {\color[HTML]{000000} 2} \\
Wisconsin     & 126           & 0.001 & 500 & 30 & 3.52   & {\color[HTML]{000000} 32} & {\color[HTML]{000000} 2} \\ \bottomrule
\end{tabular}%
}
\caption{Hyperparameters and training details for all datasets.}\label{table:char}
\end{table}

\xhdr{Hardware and Software Specifications}. Our models are implemented with Python 3.8.8, PyTorch Geometric
2.0.5 \cite{fey2019fast}, and PyTorch 1.10.0 \cite{pytorch2019}. We conduct experiments on a computer equipped with 2.3 GHz Quad-Core Intel Core i7 processor and Intel Iris Plus Graphics 1536 MB. 

\subsection{Experiment results}

In this section, we highlight the results of our experiments on the various datasets aforementioned. Additional plots are provided in the folder of supplementary materials associated with this paper.

\subsubsection{Node Classification}
First, we want to investigate the impact of the choice of operators on the node classification task. We observe that the performance of the node classification task varies by choice of $\alpha$. We fix $\beta = 1$ --- in other words, we add self-loops, consistently with the standard GCN architecture. In general, we observe that performance are highly dependent on the choice of $\alpha$ especially for the symmetrized operator, but the performance of node classification task of row-normalized operator is relatively robust to the choice of $\alpha$

\begin{figure}[H]
    \centering
        \begin{subfigure}{1\textwidth}
            \includegraphics[width=\textwidth]{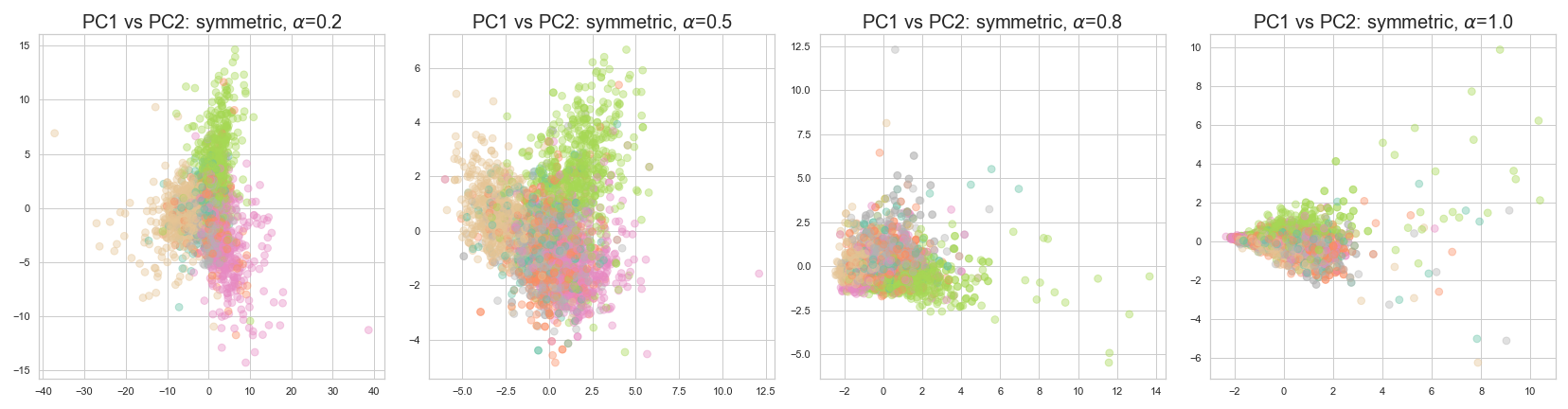}
            \caption{Citeseer, symmetrized, PCA, colored by node label}
        \end{subfigure}
        \begin{subfigure}{1\textwidth}
            \includegraphics[width=\textwidth]{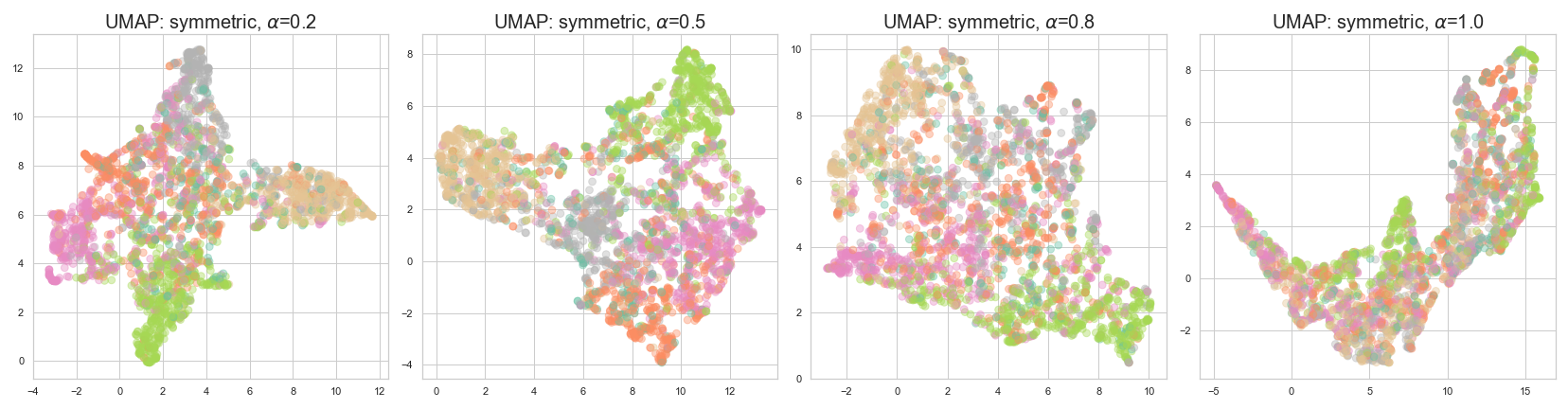}
            \caption{Citeseer, symmetrized, UMAP, colored by node label}
        \end{subfigure}
        \begin{subfigure}{1\textwidth}
            \includegraphics[width=\textwidth]{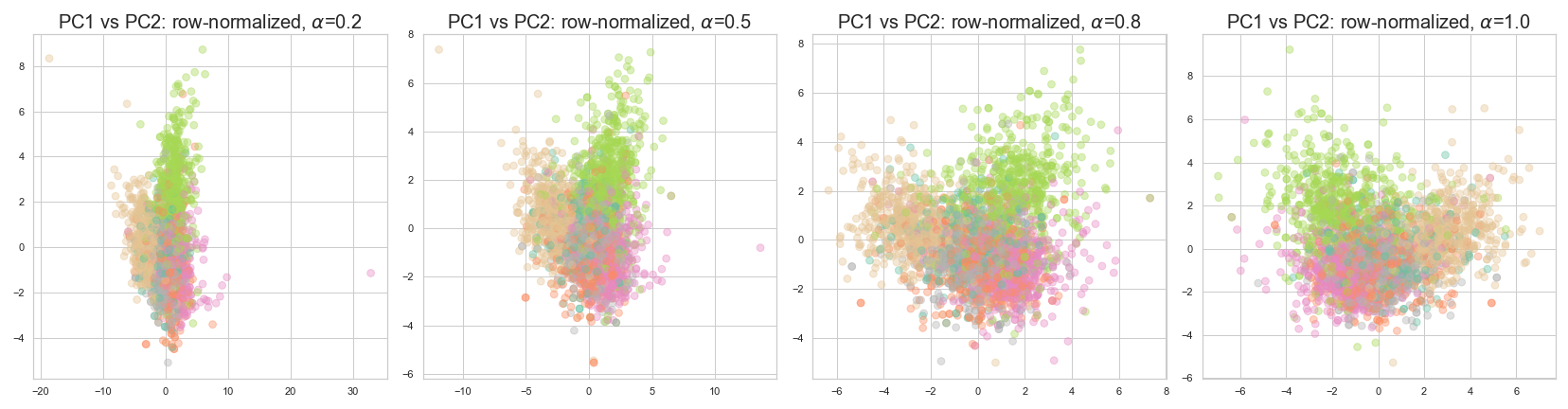}
            \caption{Citeseer, row-normalized, PCA, colored by node label}
        \end{subfigure}
        \begin{subfigure}{1\textwidth}
            \includegraphics[width=\textwidth]{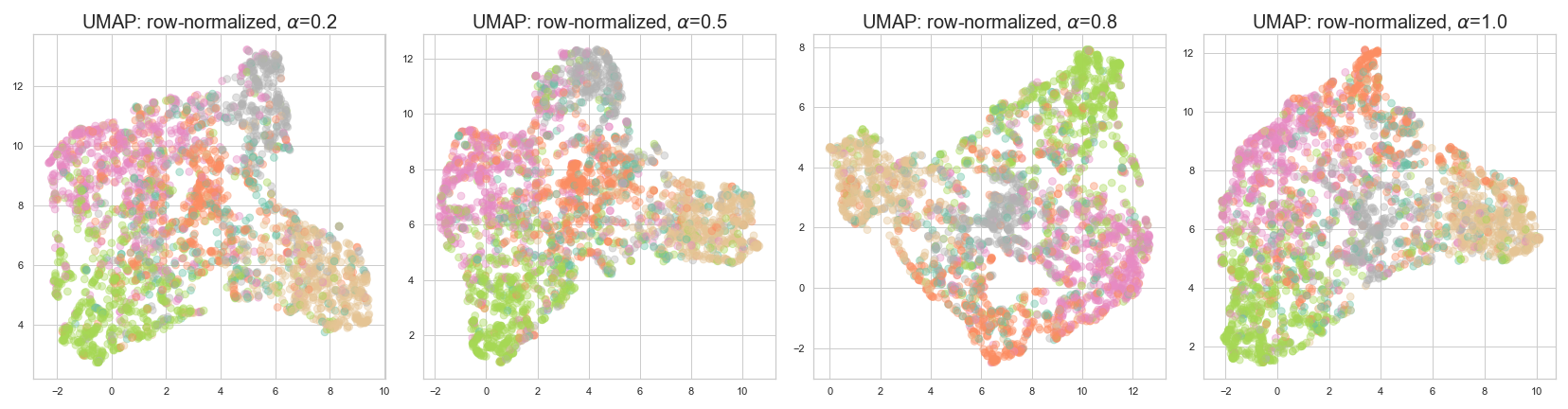}
            \caption{Citeseer, row-normalized, UMAP, colored by node label}
        \end{subfigure}
    \caption{Citeseer. The plots are colored by node labels(product categories). Embedding spaces generated by symmetric operator, (a), (b), as $\alpha$ increases the level of distinction between the cluster of different node labels decreases. Embedding space generated by row-normalized operator seems to be robust to the choice of $\alpha$-- it gives relatively constant level of clustering regardless of $\alpha$.}
    \label{fig:citeseer}
\end{figure}

\begin{figure}[H]
    \centering
        \begin{subfigure}{1\textwidth}
            \includegraphics[width=\textwidth]{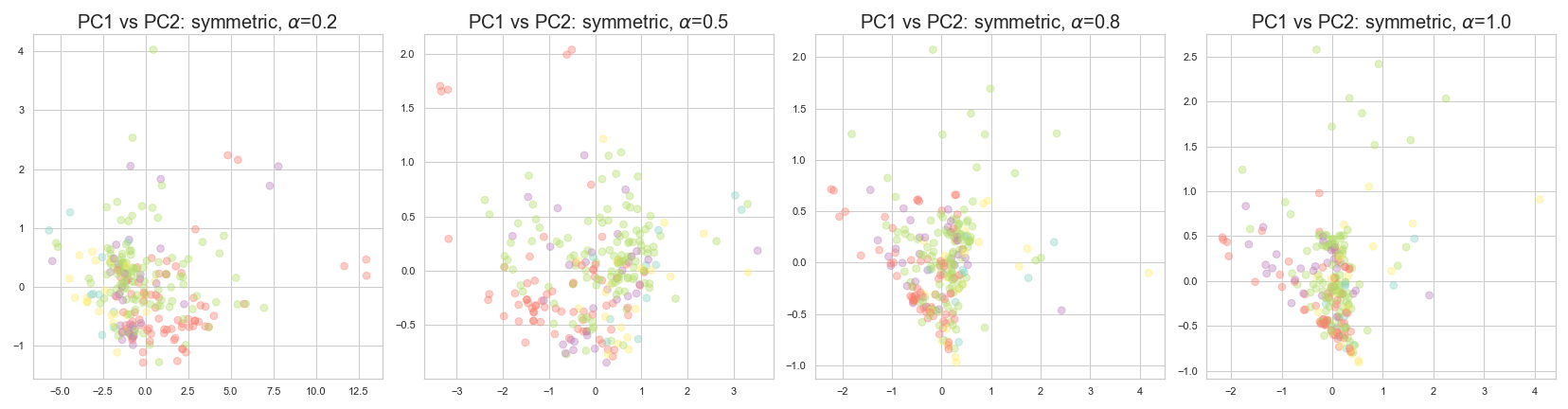}
            \caption{Wisconsin, symmetrized, PCA, colored by node label}
        \end{subfigure}
        \begin{subfigure}{1\textwidth}
            \includegraphics[width=\textwidth]{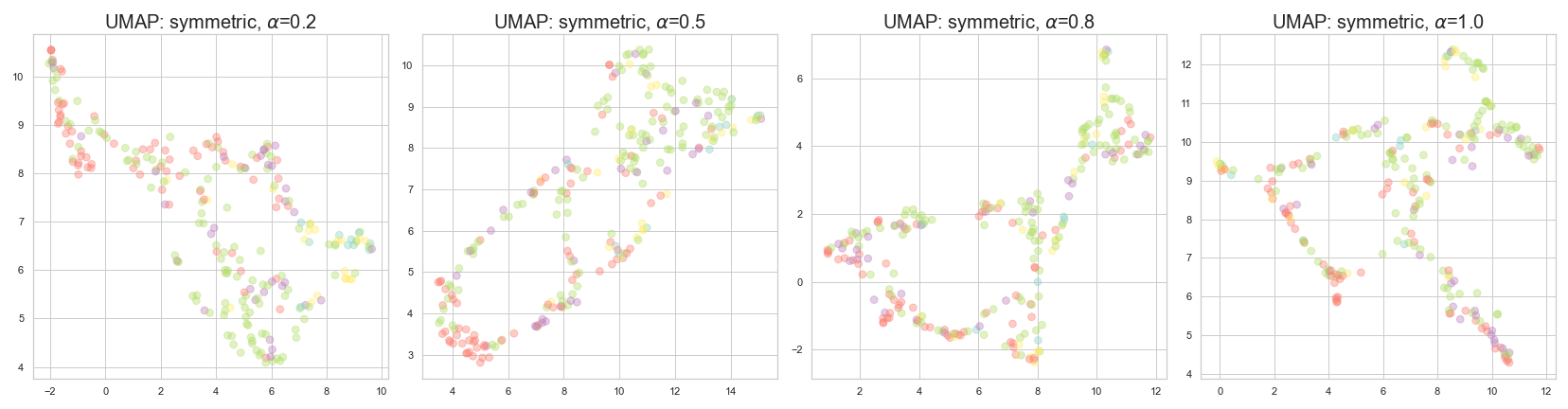}
            \caption{Wisconsin, symmetrized, UMAP, colored by node label}
        \end{subfigure}
        \begin{subfigure}{1\textwidth}
            \includegraphics[width=\textwidth]{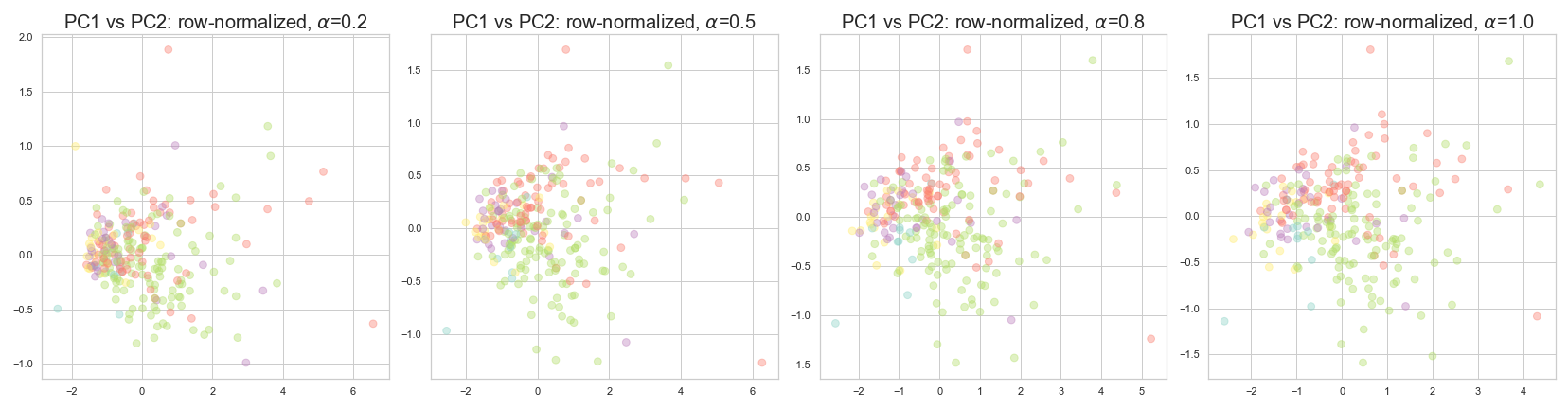}
            \caption{Wisconsin, row-normalized, PCA, colored by node label}
        \end{subfigure}
        \begin{subfigure}{1\textwidth}
            \includegraphics[width=\textwidth]{Experiment_Figures_Fin/classification/Wisconsin_embedding_normal_loop_True_UMAP_class.png}
            \caption{Wisconsin, row-normalized, UMAP, colored by node label}
        \end{subfigure}
    \caption{Wisconsin.The plots are colored by node label(categories for the webpage). Unlike the graph in Figure 8, it is hard to detect the change in the level of clustering or separation of each node class as $\alpha$ varies. PCA transformed embedding plot for row-normalized operator (c) even shows that the clustering of node label improves as $\alpha$ gets closer to 1.}
    \label{fig:wisc}
\end{figure}

\xhdr{Analysis} For standard homophilic datasets such as Citeseer(Figure \ref{fig:citeseer}), clustering of each node class has become less identifiable for a symmetric operator when $\alpha$ increases. On the other hand, the node class has been well separated across the alpha for the row-normalized operator-- the row-normalized operator is robust to the choice of $\alpha$ when it comes to node classification performance. Conversely, for the datasets with low homophily shown in Table \ref{table:char}, such as Wisconsin(Figure \ref{fig:wisc}), the separation of the node label does not change much depending on the choice of $\alpha$ or the choice of operator. The visual inspections on the embedding space transformed by PCA and UMAP are in line with the numerical result of test accuracy, Figure \ref{fig:node_class}.

 \label{sec:app-results-classification}
\begin{figure}[H]
    \centering
        \begin{subfigure}{.49\textwidth}
            \includegraphics[width=\textwidth]{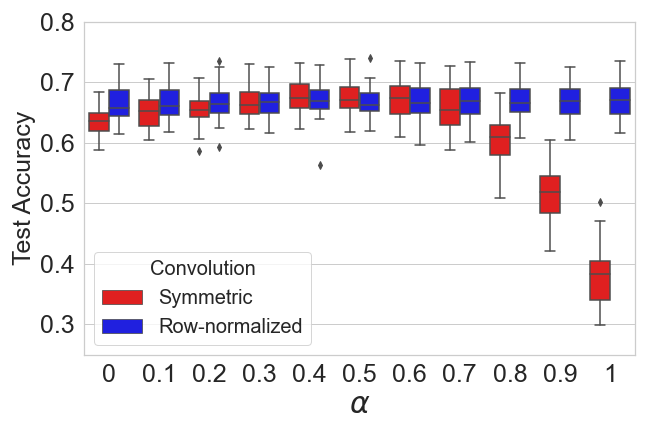}
            \caption{Citeseer}
        \end{subfigure}
        \begin{subfigure}{.49\textwidth}
            \includegraphics[width=\textwidth]{Experiment_Figures_Fin/Accuracy/wisconsin_accuracy.png}
            \caption{Wisconsin}
        \end{subfigure}
    \caption{Test accuracy for node classification task on two datasets: Citeseer and Wisconsin.}
    \label{fig:node_class}
\end{figure}

\subsubsection{Degree} \label{sec:app-results-degree}

In this subsection, we propose to investigate how basic topological characteristics (more specifically, here, the node degree) drive the organization of the embedding space. Consequently, to complement the analysis performed in the main text, we conduct visual inspection on the embedding plots of our benchmark datasets colored by node degree. 

Figures \ref{fig:co-author-cs-node-degree} and \ref{fig:amazon-node-degree} show the embedding spaces transformed by PCA, and the size and color of points denote the node degree. It is noted that the high degree nodes are marginalized when $\alpha$ close to 0, and lower degree nodes tend to be located at the origin. As $\alpha$ gets closer to 1, this pattern seems to be reverted--- the higher degree nodes are located at the origin and the lower degree nodes are pushed out to be at the periphery. 

\begin{figure}[H]
     \centering
     \begin{subfigure}{1\textwidth}
    \includegraphics[width=\textwidth]{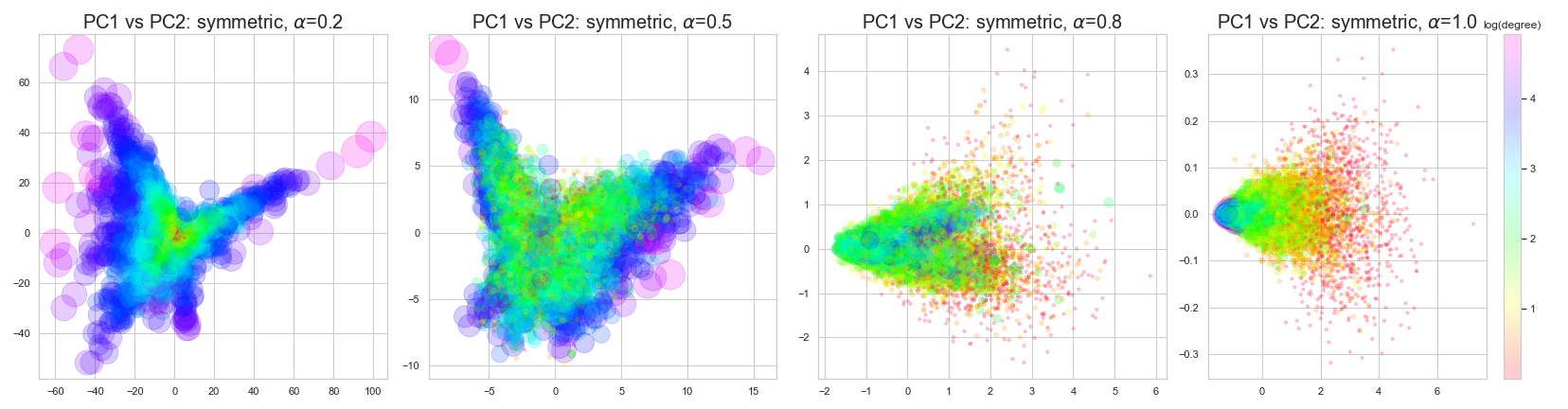}
    \caption{Coauthor CS, symmetrized, colored by degree}
    \end{subfigure}
    \begin{subfigure}{1\textwidth}
        \includegraphics[width=\textwidth]{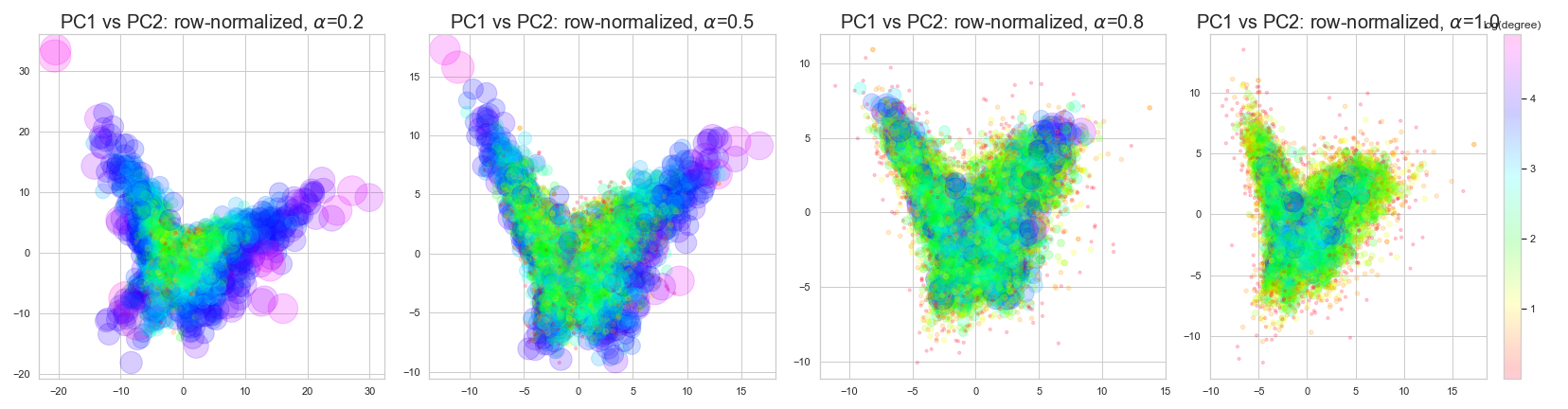}
        \caption{Coauthor CS, row-normalized, colored by degree}
    \end{subfigure}    
    \caption{Coauthor CS. The point size and color denote the node degree. For both symmetrized and row-normalized operator, high degree nodes are located farther from the origin when $alpha \approx 0$. As $\alpha$ increases, high degree nodes are concentrated on the origin, and low degree nodes are spread out instead.}
    \label{fig:co-author-cs-node-degree}
\end{figure}

\begin{figure}[H]
     \centering
     \begin{subfigure}{1\textwidth}
    \includegraphics[width=\textwidth]{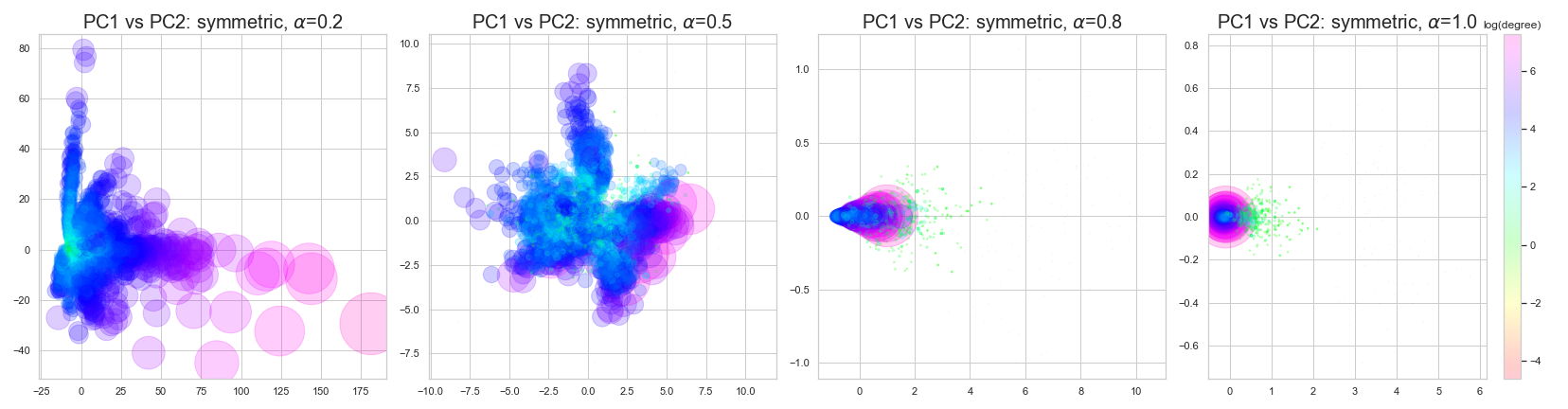}
    \caption{Amazon Photo, symmetrized, colored by degree}
    \end{subfigure}
    \begin{subfigure}{1\textwidth}
        \includegraphics[width=\textwidth]{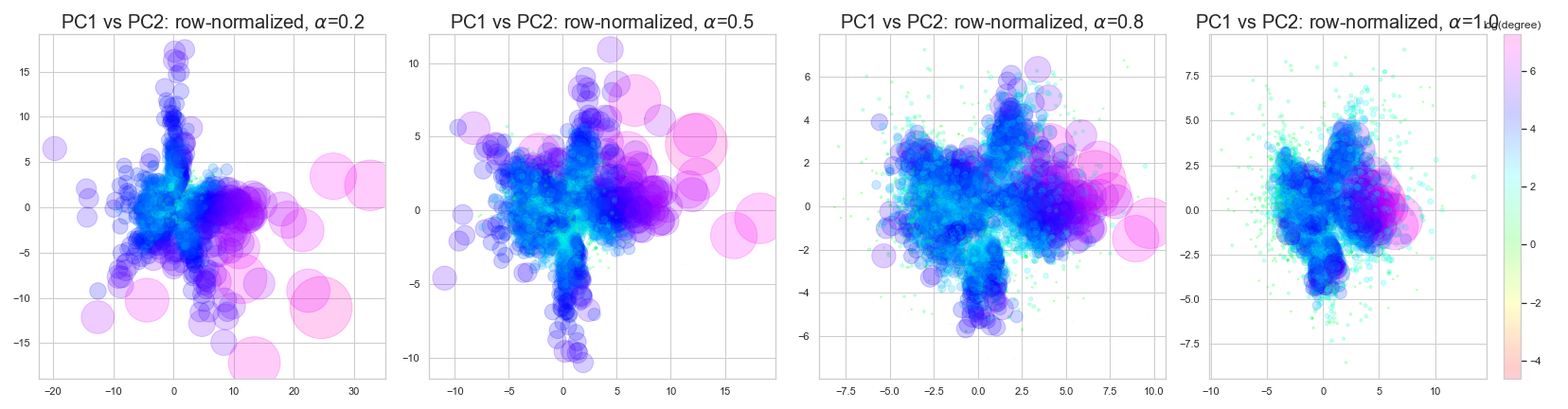}
        \caption{Amazon Photo, row-normalized, colored by degree}
    \end{subfigure}
    \caption{Amazon Photo. The point size and color denote the node degree. Amazon Photo networks have a few nodes with extremely high degree$(>500)$. Even for such nodes, as $\alpha$ increases the effect of high degree vanishes and all points are clustered near the origin.}
    \label{fig:amazon-node-degree}
\end{figure}

\subsubsection{Distance to the Original Space}\label{sec:app-results-distance}

In this subsection, we investigate the link between the relative distances between embedding points, and that of the original data.

\xhdr{Distance Calculation} The original dataset provides two separate views of the data, for which we can define two separate notions of distance: (1) a distance based on the graph structure (e.g the adjacency matrix), and (2) a distance based on the node features. For the distance in the graph, we choose to consider a distance in the graph space based on the diffusion distance \cite{coifman2006diffusion} using Gaussian kernel with $\epsilon = 0.5$. 
\begin{align*}
    K(u,v) = exp\big(-\frac{d_{\text{shortest path}
    }(\text{node u}, \text{node v})_{\alpha}^2}{\epsilon}\big)
\end{align*} 

The shortest path distance is computed by build-in function in \cite{networkx}. Distance in the feature space is measured by the pairwise euclidean distance of node features space. Finally, the distance in the embedding space is all based on the pairwise $\ell_2$ Euclidean distance. 

\xhdr{Correlation Analysis} It is a natural question to ask how embedding space closely resembles the original graph space or the feature space. The notion of closeness can be defined in several ways, but in this experiment, we first see the correlation between the distance in the original space and in the embedding space. We will use Spearman's rank correlation, which measures the monotonic relationship between the two.

Higher correlation could be interpreted as the amount of information that is retained in the embedding space regarding graph structure or node features. From Figure \ref{fig:distance-corr-graph}, both dataset show decreasing correlation as alpha increases; however, the correlation itself is actually close to 0($<0.05$). It is be reasonable to suspect no information regarding graph structure has been preserved in the embedding space, so we will come back to this question in \ref{sec:app-results-curvature}

Figure \ref{fig:distance-corr-feature} shows the trend in consonance with the test accuracy for the node classification task, Figure \ref{fig:node_class}. That is, for row-normalized operator the amount of node feature information is relatively constant for both standard homophilic dataset and heterophilic dataset such as Wisconsin. On the other hand, when it comes to the symmetrized operator, for Cora dataset, the correlation between the embedding spaces and the original feature space drops from 0.15(when $\alpha \approx 0.6$), to 0.005(when $\alpha \approx 1$). For Wisconsin dataset, the correlation still increases for the symmetrized operator, but the absolute value of correlation for the symmetrized operator is lower than that of the row-normalized operator.

\begin{figure}[H]
     \centering
     \begin{subfigure}{.49\textwidth}
    \includegraphics[width=\textwidth]{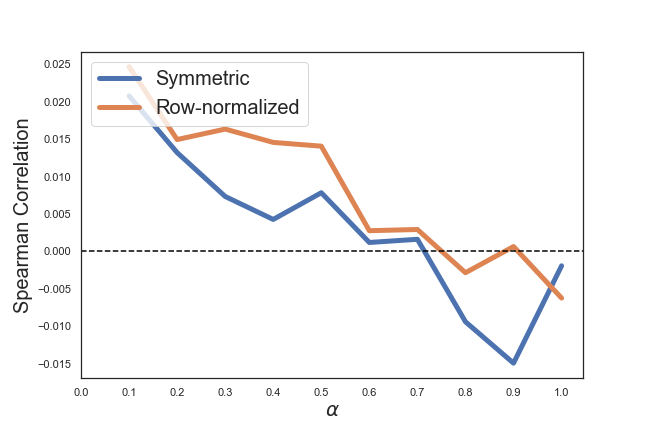}
    \caption{Cora}
    \end{subfigure}
      \begin{subfigure}{.49\textwidth}
    \includegraphics[width=\textwidth]{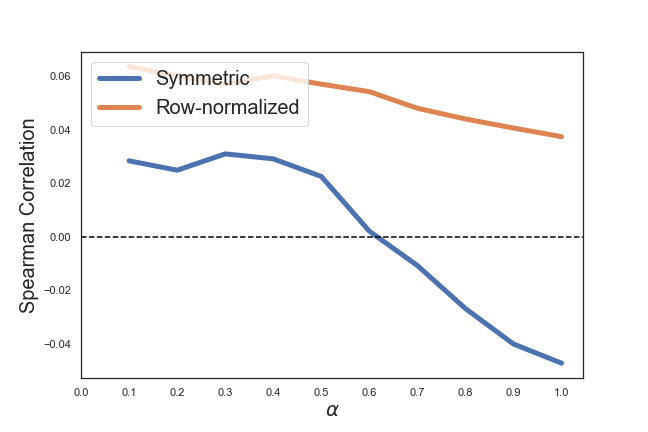}
    \caption{Wisconsin}
    \end{subfigure}
     \caption{Spearman's correlation between the pairwise distances in the graph space and pairwise distance in the embedding space. }
     \label{fig:distance-corr-graph}
\end{figure}

\begin{figure}[H]
    \begin{subfigure}{.49\textwidth}
    \includegraphics[width=\textwidth]{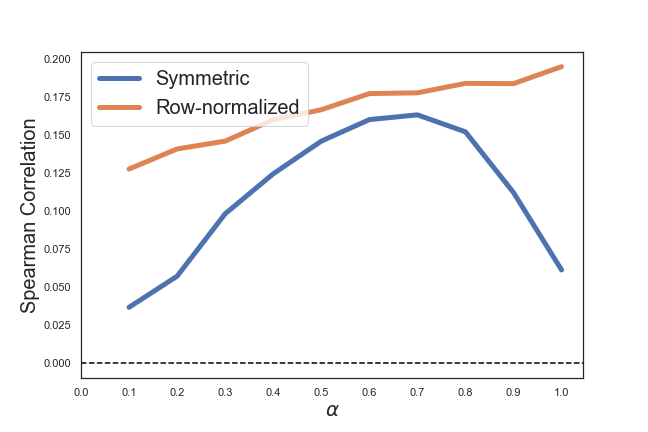}
    \caption{Cora}
    \end{subfigure}
    \begin{subfigure}{.49\textwidth}
    \includegraphics[width=\textwidth]{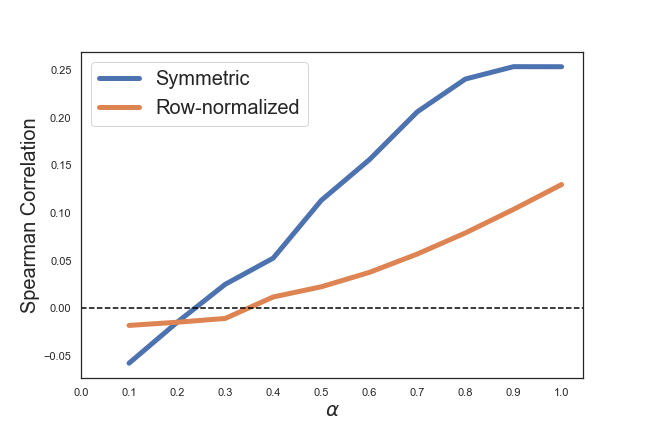}
    \caption{Wisconsin}
    \end{subfigure}
    \caption{Spearman's correlation between the pairwise distances in the feature space and pairwise distance in the embedding space}
    \label{fig:distance-corr-feature}
\end{figure}

\xhdr{Gromov-Wasserstein Distance} To this extent, we use Gromov-Wasserstein distance\cite{Memoli2011GW} which allows to measure the distance between two probability spaces of different dimensions, by comparing the within distance of probability spaces. By estimating Gromov-Wasserstein distance we can evaluate how close our embedding space is to the original space upon the choice of operators.

Based on the within distance calculated as described earlier, Gromov-wasserstein distance is calculated using python implemented \verb|ot.gromov.gromov.wasserstein| function in \verb|ot| package.\cite{flamary2021pot} \href{https://pythonot.github.io/quickstart.html}{\textit{https://pythonot.github.io}}. The detailed values from computations are shown in Figure \ref{fig:gw-graph} and Figure \ref{fig:gw-feature}.

\xhdr{Analysis} For the distance, we need to interpret in the opposite way we comprehend the correlation from earlier subsection. The lower the distance, the more the information regarding graph structure of feature has preserved in the embedding spaces. First, Cora shows the opposite pattern of distance with graph space and feature space. It can be viewed as for Cora, feature information has maximally preserved when $\alpha \approx 0.5$ for both symmetrized and row-normalized operator, while the information regarding graph structure has minimally estimated. When $\alpha \approx 0$ or $\alpha \approx 1$, the distance to the graph space is close to  0, while the distance to the feature space is close to the highest value.

On the contrary, Wisconsin seems to have similar pattern of distance for both graph and feature spaces. Embedding space recovered by the symmetrized operator, the information for both graph structure and node features are minimally retained when $\alpha \approx 0.5$. With the row-normalized operator, the distances to the original spaces increase as $alpha$ increases. 


\begin{figure}[H]
    \centering
        \begin{subfigure}[t]{0.49\textwidth}
            \includegraphics[width=\textwidth]{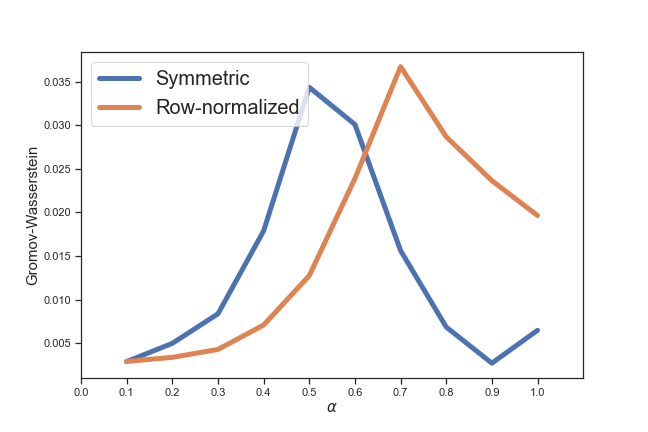}
            \caption{Cora}
        \end{subfigure}
        \begin{subfigure}[t]{0.49\textwidth}
            \includegraphics[width=\textwidth]{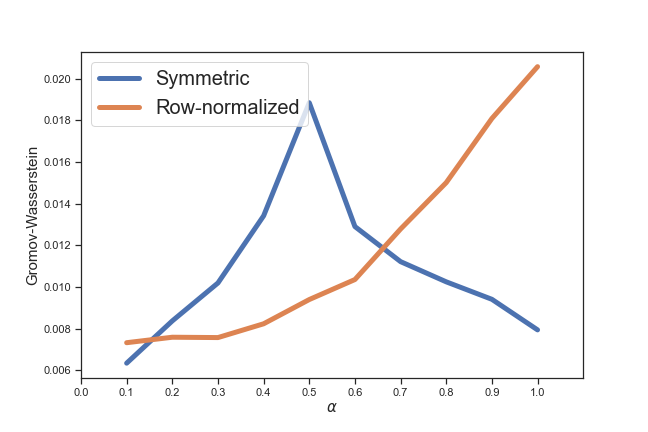}
            \caption{Wisconsin}
        \end{subfigure}
    \caption{Gromov-Wasserstein distance between the graph space and embedding space. For both datasets, when $\alpha$ is close to 0 or 1, the distance between two spaces is small for symmetrized operator. } 
    \label{fig:gw-graph}
\end{figure}

\begin{figure}[H]
    \centering
        \begin{subfigure}[t]{0.49\textwidth}
            \includegraphics[width=\textwidth]{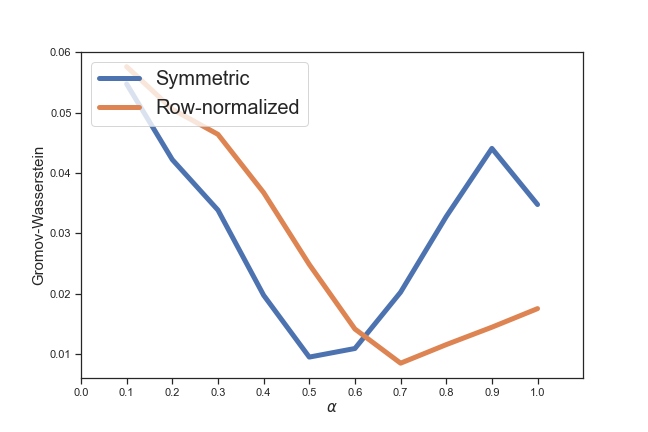}
            \caption{Cora}
        \end{subfigure}
        \begin{subfigure}[t]{0.49\textwidth}
            \includegraphics[width=\textwidth]{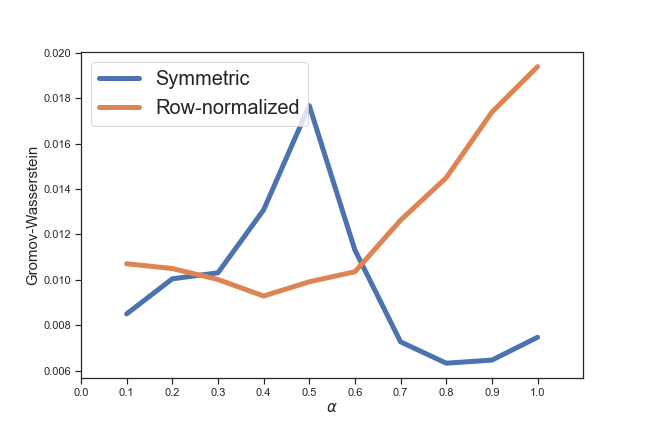}
            \caption{Wisconsin}
        \end{subfigure}
    \caption{Gromov-Wasserstein distance between the feature space and embedding space. Note that the distance variation across $\alpha$ is very similar to the accuracy of node classification task along $\alpha$ on Figure \ref{fig:node_class}
    }
    \label{fig:gw-feature}
\end{figure}

\subsubsection{Curvature} \label{sec:app-results-curvature}

In this section, the embedding space is compared to the original space with regard to the geometry of the original space. We first narrow down the notion of geometry to  a graph curvature. Graph curvature could explain the structural properties of the data that cannot be fully captured by the node degree. One might reasonably wonder how this structural information or geometry of the graph could be preserved from original space to the embedding space. We use augmented Forman curvature for the graph defined in \cite{Giovanni2022Curvature}. 

\begin{table}[H]
\centering
\resizebox{\textwidth}{!}{%
\begin{tabular}{@{}ccccccccc@{}}
\toprule
 &
  \textbf{Cora} &
  \textbf{Pubmed} &
  \textbf{Citeseer} &
  \textbf{\begin{tabular}[c]{@{}c@{}}Coauthor\\ CS\end{tabular}} &
  \textbf{\begin{tabular}[c]{@{}c@{}}Amazon\\ Photo\end{tabular}} &
  \textbf{Actor} &
  \textbf{Cornell} &
  \textbf{Wisconsin} \\ \midrule
\textbf{Mean} &
  -9.6178 &
  -18.9898 &
  -3.2427 &
  -14.4801 &
  -99.8552 &
  -8.2625 &
  -41.9855 &
  -46.2206 \\
\textbf{SD} &
  16.0352 &
  15.9152 &
  8.5414 &
  11.7537 &
  110.3011 &
  21.2629 &
  43.2669 &
  52.3953 \\ \bottomrule
\end{tabular}%
\caption{Mean and Standard deviation of augmented forman curvature\cite{Giovanni2022Curvature} for 8 Datasets}
}
\end{table}

To calculate the graph curvature on the embedding space, we have to reconstruct the graph on the embedding space. First, based on the euclidean distance of each node in the embedding space, we connect the same number of edges as the original graph. With this "reconstructed graph" on the embedding space, we calculate the graph curvature. Finally, we compare how much curvature has been preserved upon varying operators by Spearman's rank correlation. 

Figure \ref{fig:curvature-corr} shows (a) Cora and (b) Amazon Photos, standard datasets with high homophily as shown in Table 3, Spearman's correlation between the original curvature and embedding curvature are relatively constant around 0.3 with row-normalized operator. On the other hand, symmetrized operator has stronger positive correlation when $\alpha \approx 1$. For the dataset with low homophily, denoted as heterophilic graph dataset, such as (c) Cornell or (d) Wisconsin, not only the absolute value of the Spearman's correlation is much lower than that of results from homophilic dataset, but also there is a decreasing trend across the $\alpha$ for both symmetrized and row-normalized operator. 

Based on these observations, the geometry in terms of curvature seems to be better preserved when $\alpha \approx 1$ for the dataset with high homophily. When the graph is of low homophily, symmetrized operator works slightly better preserving the curvature, but the absolute value of the correlation itself is fairly low compared with the result of high homophily dataset, such as Figure \ref{fig:curvature-corr} (a) or (b).

\begin{figure}[H]
     \centering
     \begin{subfigure}{.49\textwidth}
    \includegraphics[width=\textwidth]{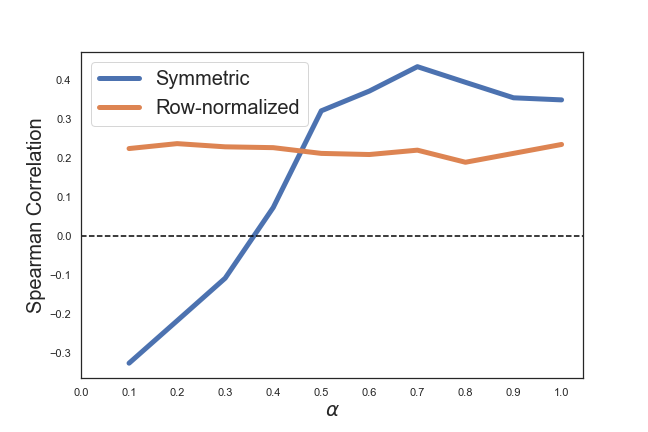}
    \caption{Cora}
    \end{subfigure}
    \begin{subfigure}{.49\textwidth}
    \includegraphics[width=\textwidth]{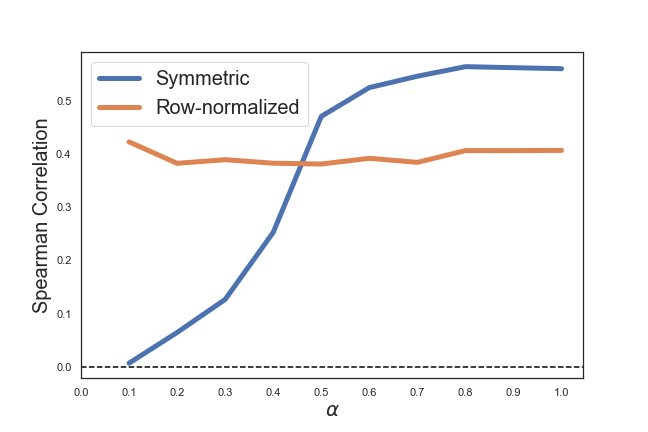}
    \caption{Citeseer}
    \end{subfigure}
     \begin{subfigure}{.49\textwidth}
    \includegraphics[width=\textwidth]{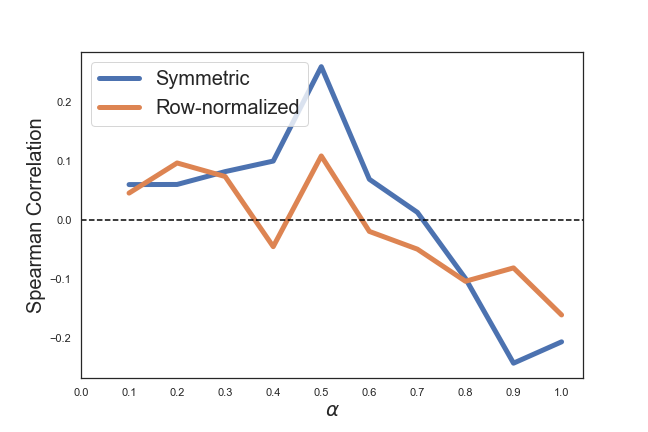}
    \caption{Cornell}
    \end{subfigure}
    \begin{subfigure}{.49\textwidth}
    \includegraphics[width=\textwidth]{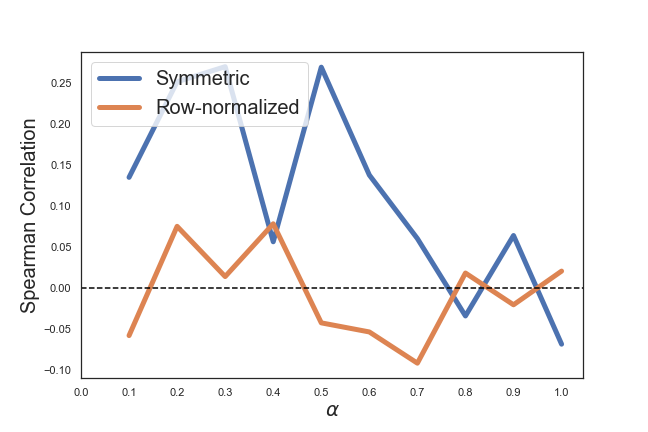}
    \caption{Wisconsin}
    \end{subfigure}
    \caption{Spearman's correlation between the original and embedding curvature. }
    \label{fig:curvature-corr}
\end{figure}

\begin{figure}[H]
     \centering
     \begin{subfigure}{1\textwidth}
    \includegraphics[width=\textwidth]{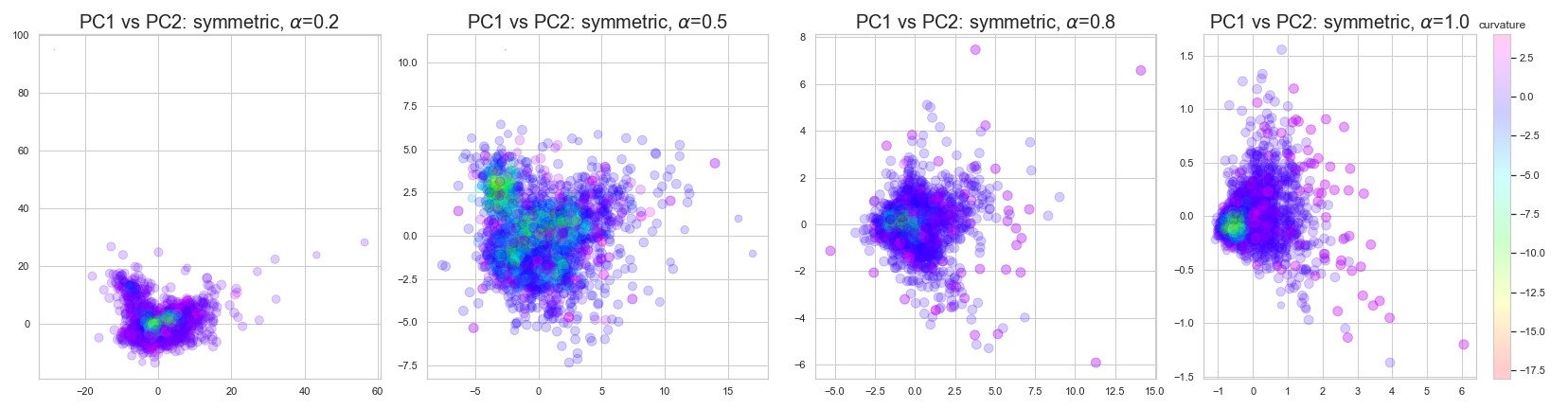}
    \caption{Cora, symmetrized, colored by embedding curvature}
    \end{subfigure}
     \begin{subfigure}{1\textwidth}
    \includegraphics[width=\textwidth]{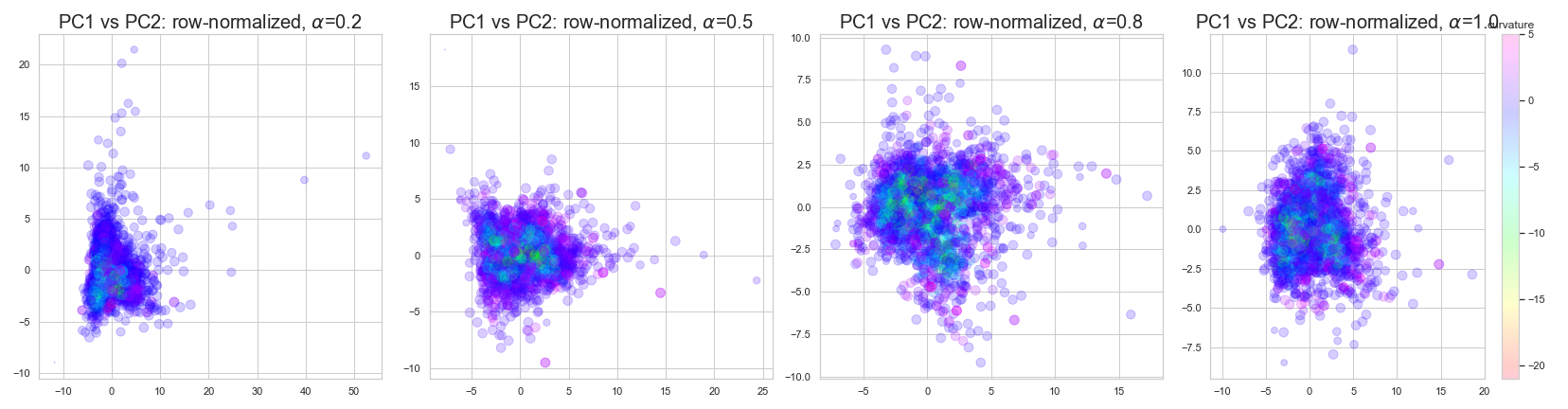}
    \caption{Cora, row-normalized, colored by embedding curvature}
    \end{subfigure}
    \label{fig:my_label}
    \caption{Cora. The points are colored by embedding curvature and the size is proportional to the original curvature.}
\end{figure}

\begin{figure}[H]
     \centering
     \begin{subfigure}{1\textwidth}
    \includegraphics[width=\textwidth]{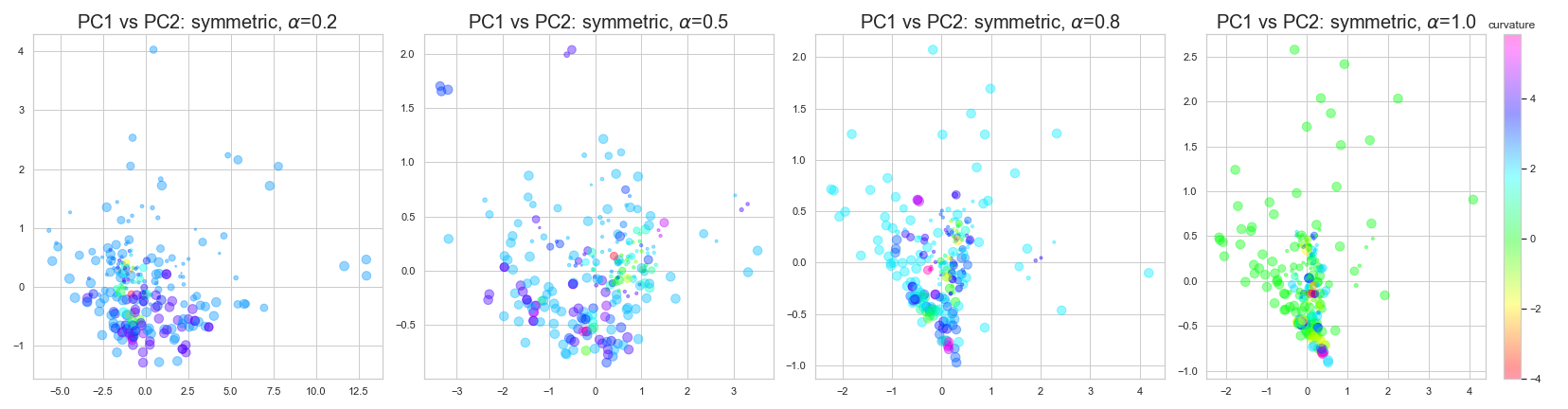}
    \caption{Wisconsin, symmetrized, colored by embedding curvature}
    \end{subfigure}
     \begin{subfigure}{1\textwidth}
    \includegraphics[width=\textwidth]{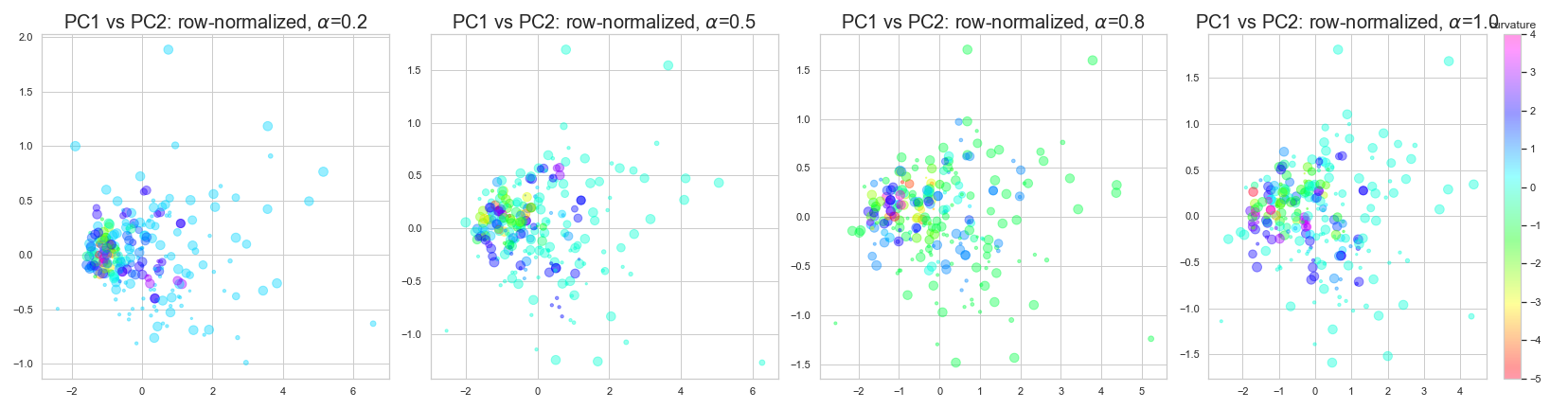}
    \caption{Wisconsin, row-normalized, colored by embedding curvature}
    \end{subfigure}
    \label{fig:my_label}
    \caption{Wisconsin. The points are colored by embedding curvature and the size is proportional to the original curvature.}
\end{figure}

\subsection{Additional Experimental results}\label{sec:app-additional}
\begin{figure}
        \centering
    \includegraphics[width=\textwidth]{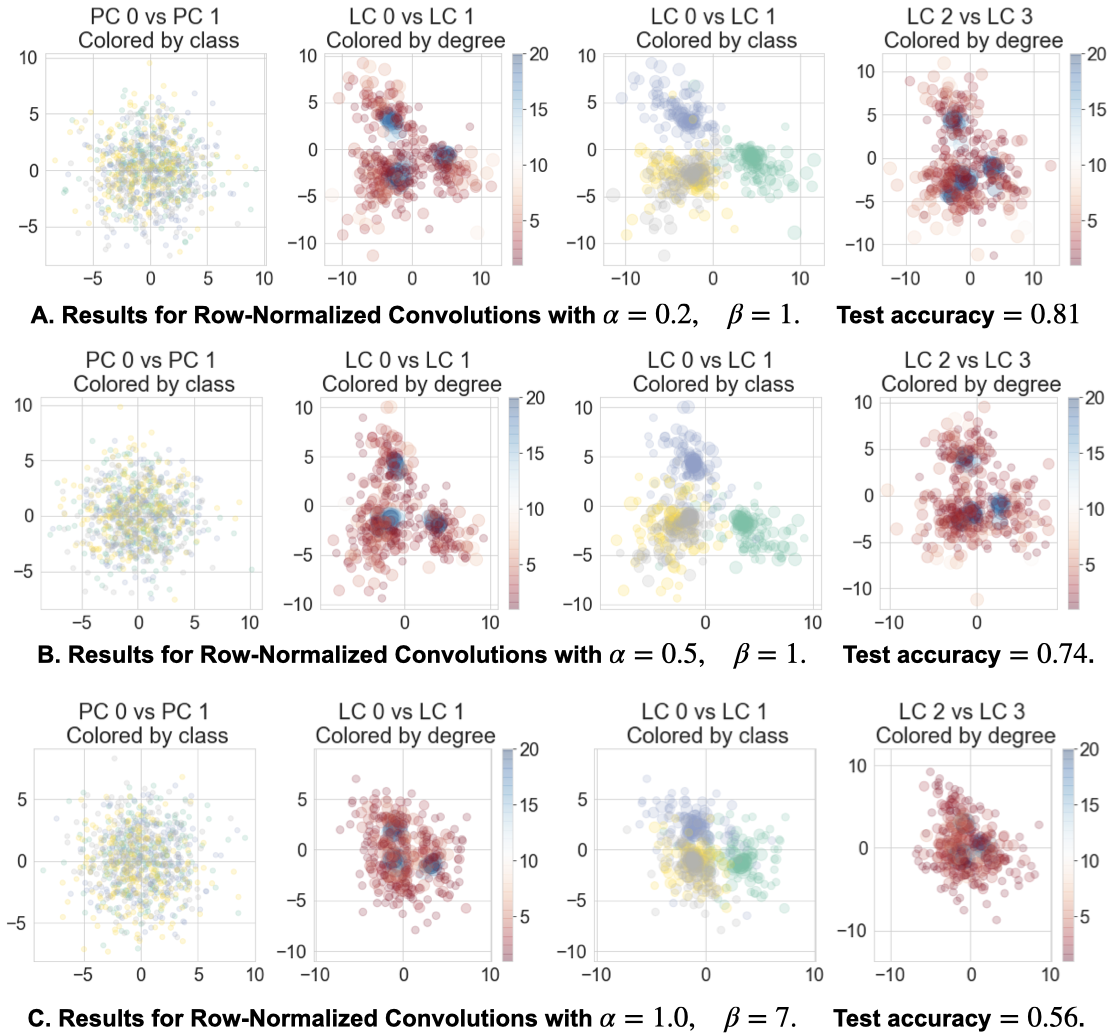}
    \caption{Row-Normalized Embeddings, plotted using the first two principal components (left), and the raw latent embeddings (or `latent components', shown in the right three plots on each row). Note the inversion: the high degree nodes migrate from the periphery of the latent space ($\a=0.2$) to the origin $(\a=0.7)$.}
    \label{fig:panel_regularised}
\end{figure}

\end{document}